\documentclass[11pt]{article}
\usepackage[preprint]{acl}
\usepackage{times}
\usepackage{latexsym}
\usepackage[T1]{fontenc}
\usepackage[utf8]{inputenc}
\usepackage{microtype}
\usepackage{amsmath,amssymb,mathtools,amsthm}
\usepackage{booktabs}
\usepackage{enumitem}
\usepackage{graphicx}
\usepackage{algorithm}
\usepackage[noend]{algpseudocode}
\usepackage[most]{tcolorbox}
\usepackage{xcolor}
\usepackage{placeins}  % \FloatBarrier to prevent table/figure collisions

\graphicspath{{figures/}}

\hypersetup{
  colorlinks=true,
  linkcolor=[rgb]{0,0,0.6},
  citecolor=[rgb]{0,0,0.6},
  urlcolor=[rgb]{0,0,0.6}
}

\algrenewcommand\algorithmicrequire{\textbf{Require:}}
\algrenewcommand\algorithmicensure{\textbf{Return:}}

\usepackage{array}
\usepackage{multirow}
\definecolor{TheoremFrame}{HTML}{292074}
\definecolor{TheoremBg}{HTML}{F6F5FC}
\definecolor{AssumptionFrame}{HTML}{478978}
\definecolor{AssumptionBg}{HTML}{F8FEFC}

\newtcolorbox{thmframe}{
  enhanced, breakable,
  colback=TheoremBg, colframe=TheoremFrame,
  boxrule=0.9pt, arc=3pt,
  left=3mm, right=3mm, top=1mm, bottom=1mm
}
\newtcolorbox{assframe}{
  enhanced, breakable,
  colback=AssumptionBg, colframe=AssumptionFrame,
  boxrule=0.8pt, arc=3pt,
  left=3mm, right=3mm, top=1mm, bottom=1mm
}

\definecolor{PromptFill}{RGB}{241,246,252}
\definecolor{PromptEdge}{RGB}{86,112,146}

\newtcolorbox{promptbox}[1]{
  enhanced,
  breakable,
  colback=PromptFill,
  colframe=PromptEdge,
  coltitle=PromptEdge,
  fonttitle=\bfseries,
  title={#1},
  boxrule=0.8pt,
  arc=3pt,
  left=7pt,
  right=7pt,
  top=7pt,
  bottom=7pt,
  before skip=8pt,
  after skip=8pt
}

\newcommand{\promptfield}[1]{\par\smallskip\noindent\textbf{\textcolor{PromptEdge}{#1}}}

\newtheorem{theorem}{Theorem}[section]
\newtheorem{proposition}[theorem]{Proposition}
\newtheorem{lemma}[theorem]{Lemma}

\title{AbstRAG: Learning to Abstract for Retrieval Problems}
\author{
  Lei Xu$^{1,2}$ \quad Xin Quan$^{1}$ \quad Daniel Pedronette$^{3}$ \quad Andr\'e Freitas$^{1,4,5}$ \\[0.3em]
  $^{1}$Idiap Research Institute, Switzerland \\
  $^{2}$\'Ecole Polytechnique F\'ed\'erale de Lausanne (EPFL), Switzerland \\
  $^{3}$S\~ao Paulo State University, Brazil \\
  $^{4}$Department of Computer Science, University of Manchester, United Kingdom \\
  $^{5}$CRUK National Biomarker Centre, University of Manchester, United Kingdom \\[0.2em]
  \texttt{\{lei.xu, xin.quan, andre.freitas\}@idiap.ch} \quad \texttt{pedronette@unesp.br}
}
\date{}

\begin{document}
\maketitle

\begin{abstract}
Retrieval-augmented generation often fails when the query, the document evidence, and the user's intent are expressed at different levels of abstraction. A query may ask about a class, a relation, or an event, while the document only states specific instances, indirect framings, or scoped formulations. We define this mismatch as an \emph{abstraction gap}: the minimal set of typed assumptions required to align query intent with the available evidence. 
To close this gap, we introduce \textbf{AbstRAG}, which treats abstraction as an explicit retrieval object. AbstRAG decomposes the query--evidence gap into expression, conceptual, intent--evidence, and event-type components, and scores relevance by combining match quality, a query-independent utility prior, and the cost of the required bridges. Its central mechanism is \emph{reflective refinement}: a critic diagnoses retrieval failures, localizes the failed abstraction operator, proposes a minimal stage-specific patch, and accepts the patch only under sufficiency and compression controls. Across three within-document retrieval benchmarks against seven baselines, AbstRAG outperforms on nDCG@10 in 18 of 21 paired-bootstrap contrasts and improves generation accuracy by 1.9\%, 5.2\%, and 4.0\% across the three benchmarks; ablations confirm that reflective refinement drives most of the retrieval gain and the compression control alone reduces over-expansion false positives from 73.7\% to 0\% on a stress slice.
\end{abstract}

%\begin{abstract}
%Large language models that read user documents through retrieval-augmented generation (RAG) rely on a retrieval stage that often fails when the evidence is expressed at a different abstraction level or discourse locus than the wording the query uses. Under such mismatch, embedding similarity merges two structurally distinct signals, surface variation and missing role pairing, into a single scalar, and downstream generation hallucinates the missing link. We introduce AbstRAG, a single-document retrieval framework whose core mechanism is \emph{reflective refinement}, a critic-driven loop that edits typed bridging operators under sufficiency and compression guards rather than rewriting individual answers. Refinement operates on three supporting objects, a utility prior, a costed semantic gap, and an indexing--query operator calculus, that together define where refinement patches can act. On three within-document benchmarks (SciFact, FEVEROUS, QASPER) against seven baselines, removing reflective refinement is supported by paired-bootstrap CIs in $7$ of $8$ metric-by-dataset comparisons, with the compression guard alone suppressing over-expansion false positives from $0\%$ to $73.7\%$ on a stress slice. AbstRAG further leads nDCG@10 with paired-CI gains on $18$ of $21$ system contrasts and lifts generation accuracy by $+2.2\%$ / $+5.2\%$ / $+4.0\%$ across the three benchmarks.
%\end{abstract}

\section{Introduction}
\label{sec:intro}

Retrieval-augmented generation (RAG) is intended to ground language-model outputs in external evidence, but many of its failures are not just failures to retrieve a lexically similar passage \cite{lewis2020rag,chen2024survey}: they arise when the query, the available evidence, and the user's intended information need are expressed at different levels of abstraction. A user may ask for a category, a relation, or an event, while the document states only concrete roles, consequences, artifacts, dates, or local descriptions. In such cases, relevance depends on whether the retrieval system can produce a bridge between the evidence and the query intent: a `CEO' may satisfy a query for an `executive', a recorded visit may support a query about a meeting or collaboration, and a document-level date may supply the temporal scope for an otherwise underspecified sentence. Dense retrieval typically treats these distinct operations as a single embedding-based similarity score, so the generator is left either to infer an abstraction that the evidence does not support, or to miss the evidence altogether \cite{karpukhin2020dpr,khattab2020colbert,aly2021feverous,chen2024complexclaim,schlichtkrull2023averitec}.

We call this mismatch the \emph{RAG abstraction gap}: the minimal set of typed assumptions required to align query intent with the evidential form available in the document. The gap covers expression-level variation, conceptual abstraction, intent-evidence membership, and event-type inference; each bridge carries a different inferential risk and should therefore be named, costed, and repairable. AbstRAG is built around this view, treating abstraction as an explicit retrieval object.

The gap is not addressable via a single similarity relation. A candidate segment supports a query only after distinct inferential steps are enacted, e.g.\ entity normalization, role binding under a shared temporal or discourse scope, and linking of indirect consequences to event-level evidence. Dense retrieval folds these heterogeneous operations into one similarity score grounded on latent objects, and so cannot identify which bridge made a segment relevant, or where a failure should be repaired \cite{karpukhin2020dpr,khattab2020colbert,wu2023structuredevidence,zhang2023utility}. Structured, graph-based, and set-selection retrieval make more document structure available \cite{edge2024graphrag,gutierrez2024hipporag,lee2025setr,chen2024survey}, but their improvements stay at the candidate representation, leaving the bridge itself implicit.

Document structure provides a second, weaker signal about where such bridges are worth attempting. Evidence-bearing content is not uniformly distributed: in a scientific report, results, methods, and conclusions tend to carry the asserted claims and primary evidence, while background sections supply context. This regularity does not by itself determine relevance, but it gives retrieval a query-independent utility prior. AbstRAG uses this prior to favor contribution-bearing segments before query-specific matching, while leaving the decision that a segment actually supports the query to the typed bridge calculus.

AbstRAG frames retrieval as costed abductive alignment between query intent and document evidence. The relevance score combines match quality, the utility prior, and a bridge-cost discount, so a segment is preferred when it satisfies the query with fewer or safer bridges. The key additional step is \emph{reflective refinement} \cite{madaan2023selfrefine,asai2023selfrag,yan2024crag,quan2024explanationrefiner,mohr2026reflective}: when retrieval is insufficient, a critic diagnoses the failed abstraction, localizes the responsible bridge family, proposes a minimal patch to the corresponding operator, and accepts the patch only under sufficiency, compression, and non-regression controls. Figure~\ref{fig:loop} summarizes the stage-localized procedure connecting indexing-time canonicalization, query-time controlled expansion, and reflective updates.

Our contributions are threefold.
\begin{enumerate}[leftmargin=1.5em,itemsep=1pt,topsep=2pt]
  \item \emph{Method.} We make reflective refinement a first-class retrieval object: a critic-driven loop that edits typed bridging operators under explicit acceptance criteria. Three supporting objects, a utility prior, a costed semantic gap, and an indexing-query operator calculus connected by minimal-context lifting, together specify where refinement patches can act, with full development in Sections~\ref{sec:setup-motivating}--\ref{sec:refinement}.
  \item \emph{Structural properties.} We establish two structural properties of the framework: a multiplicative dominance order on the relevance score, and a zero-cost conservativity condition that identifies which document-side operators may safely move to indexing time without changing retrieval semantics; both are stated in Section~\ref{sec:opcalc} and Section~\ref{sec:relevance}, with proofs in Appendix~\ref{apx:theorems}.
  \item \emph{Empirical.} On three within-document benchmarks against seven baselines, AbstRAG outperforms every baseline on nDCG@10 in the vast majority of cases and improves generation accuracy on all three benchmarks; ablations attribute most of the retrieval gain to reflective refinement.
\end{enumerate}

\begin{figure*}[t]
  \centering
  \includegraphics[width=0.82\textwidth]{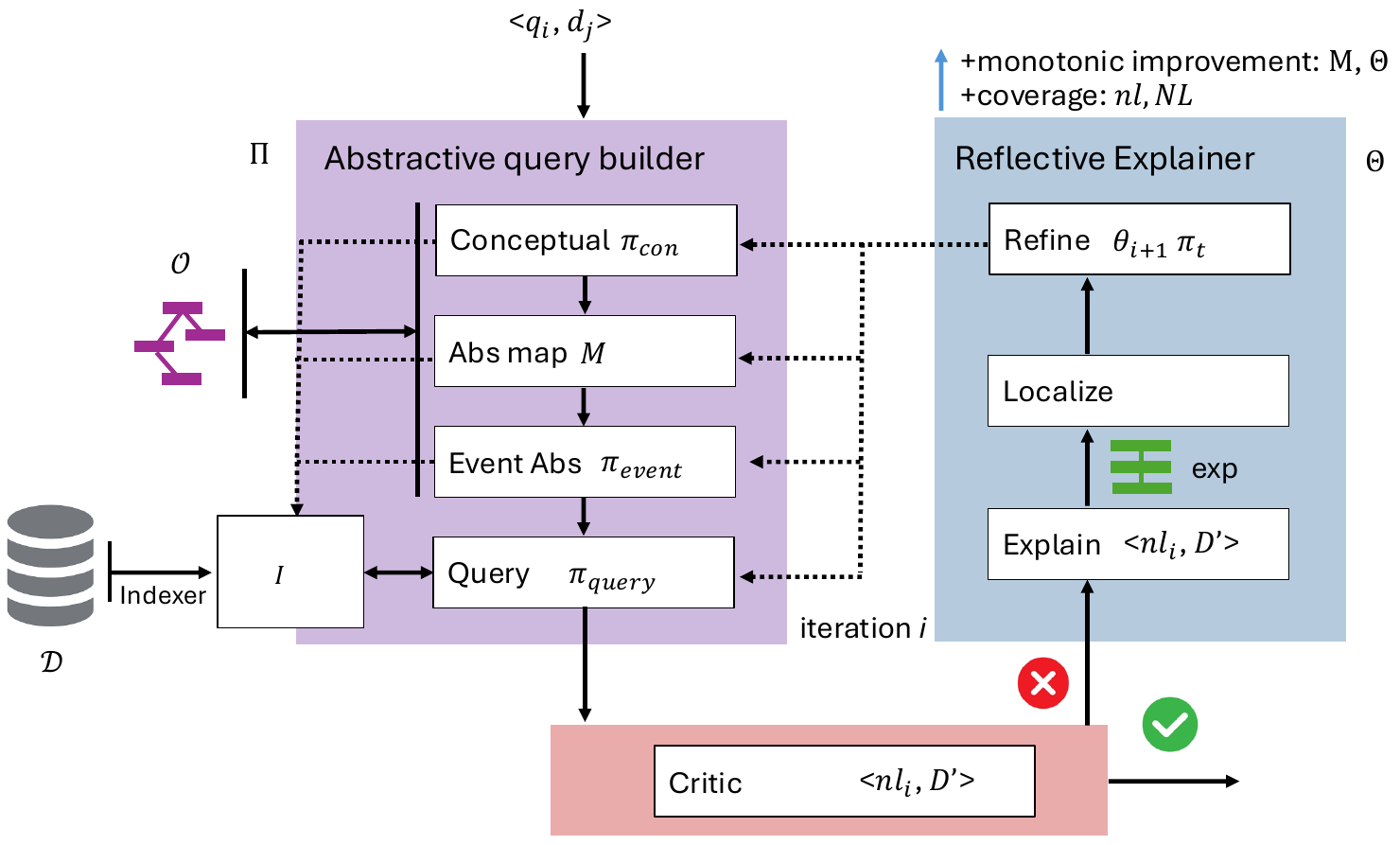}
  \caption{AbstRAG with stage-localized reflective refinement. The abstractive query builder $\Pi$ (purple, left) composes typed query stages $\pi_{\mathrm{con}}, \pi_{\mathcal{M}}, \pi_{\mathrm{event}}, \pi_{\mathrm{query}}$ over the indexed document, producing a top-$K$ retrieval. The critic (red, bottom) either accepts the result (\checkmark) or routes a rejected retrieval to the reflective explainer $\Theta$ (blue, right), which explains the failure, localizes the responsible stage, and refines the corresponding operator $\pi_t$ for the next iteration. Patches are accepted only under monotonic improvement of $\mathrm{Suff}$ and $\mathrm{Comp}$ on the control / trigger episode set.}
  \label{fig:loop}
\end{figure*}

\section{Related Work}
\label{sec:related}

RAG has developed along two complementary directions. One direction improves the retrieval representation: dense passage retrieval and late interaction \cite{karpukhin2020dpr,khattab2020colbert,lewis2020rag}, hypothesis-document expansion \cite{gao2023hyde}, set selection \cite{lee2025setr}, and graph-derived retrieval \cite{edge2024graphrag,gutierrez2024hipporag,wu2023structuredevidence}. The other places retrieval inside an adaptive or reflective control loop: query rewriting \cite{ma2023rewrite}, interleaved chain-of-thought retrieval \cite{trivedi2023ircot,jiang2023flare,shao2023iterretgen}, adaptive routing between policies \cite{jeong2024adaptiverag,li2025unirag,ye2025qprm,jullien2024controlledretrieval,ranaldi2025contrastiverag}, and critic-driven revisers that edit the generated answer \cite{asai2023selfrag,yan2024crag,madaan2023selfrefine,ranaldi2024selfrefine,quan2024explanationrefiner,mohr2026reflective}. Both directions improve specific stages but leave abstraction implicit: surface variation, role binding, and conceptual mismatches collapse into one similarity score or are repaired only by free-form answer rewrites.

The closest literature to AbstRAG are the critic-driven revisers Self-RAG \cite{asai2023selfrag} and CRAG \cite{yan2024crag}. AbstRAG differs in three ways: the abstraction map $\mathcal{M}_D$ is document-local with typed and explicit abstraction bridges; reflective refinement edits the retrieval mechanism under typed acceptance criteria, while Self-RAG and Self-Refine \cite{madaan2023selfrefine} edit the generated answer; and edits are inference-time stage-local patches accepted under sufficiency criteria, while CRAG and Q-PRM \cite{ye2025qprm} apply per-query string rewrites.  Claim-verification work \cite{thorne2018fever,wadden2020scifact,aly2021feverous,jiang2020hover,schlichtkrull2023averitec,glockner2024ambifc,chen2024complexclaim,zerong2025claimsurvey} shares the bridging perspective on structurally displaced evidence; we borrow that framing but without the explicit abstraction component.

% DSPy \cite{khattab2023dspy} is closest in design philosophy but operates at train-time pipeline compilation rather than inference-time refinement of typed bridges.

\section{Method}
\subsection{Notation and Motivating Example}
\label{sec:setup-motivating}

A document $D$ is segmented into discourse units $\mathrm{Seg}(D)=\{s_1,\dots,s_n\}$, each with a layout role $\mathrm{role}_L(s)$ drawn from a small set of canonical sections that we later use to define the query-independent utility prior. A query $q$ is represented by a target semantic form $\varphi_q$ together with optional answer-type, temporal, and scope constraints. We scope this paper to single-document retrieval support; accordingly, the background resource $B$ collects alias dictionaries, temporal-normalization rules, lightweight taxonomic knowledge, and event-schema hints drawn from within or paired with that single document. We write $S(s)$ for the semantic content of $s$, i.e. the claims, properties, concepts, and evidence objects retrieval matches against; fine-grained typing is in Appendix~\ref{apx:preliminaries}.

To make the gap concrete, consider the query ``\emph{Give me all collaborations between LLM companies and chip suppliers in 2024}'' against a tech-news document whose header dates to \texttt{2024-05-15} and contains the sentence ``\emph{The CTO of Anthropic visited the head of TSMC's 3 nm fabrication division this Tuesday.}'' Embedding-only retrieval typically returns generic ``LLM--chip partnerships 2024'' and misses the Anthropic--TSMC sentence, because retrieving it requires three operations that embedding similarity does not name: bridging \emph{TSMC} to \emph{chip supplier} and \emph{Anthropic} to \emph{LLM company} through affiliation, bridging \emph{CTO} and \emph{division head} to \emph{company representative} through office-to-type rules, and binding both roles to the same event under a temporal scope drawn from the document header. In our main experimental runs the role and event bridges are populated by the document-local abstraction map $\mathcal{M}_D$, as developed in Section~\ref{sec:opcalc}; a 30-case toy-$B$ sanity check in Appendix~\ref{apx:toyb} verifies that AbstRAG works as intended when $B$ is non-empty. The remainder of the paper makes each bridge a first-class retrieval modeling object, and routes refinement to whichever mechanism is responsible when retrieval fails.

\subsection{Utility Prior}
\label{sec:utility}

Factual documents distribute their key content unevenly across segments. Results and conclusions carry the novel claims, methods carry credibility cues, and background sections carry context. We model this distribution as a query-independent prior $U(s \mid D) \in \mathbb{R}_{\ge 0}$ grounded on document layout, interpreted as the expected contribution of $s$ under a query distribution appropriate for the document's genre and domain.

$U$ combines a layout-role term and a contribution-mass term additively,
\begin{equation}
U(s \mid D) = \alpha \, U^{L}(s) + U^{\mathrm{contrib}}(s),
\label{eq:utility}
\end{equation}
where $U^{L}(s) = w_L(\mathrm{role}_L(s) \mid \mathrm{genre}, \mathrm{domain})$ is a layout-role weight keyed by genre and domain, and $U^{\mathrm{contrib}}(s) = \mathrm{NovMass}(s) \cdot \mathrm{AsrtMass}(s)$ is the product of a novelty term and an assertedness term. The additive outer form lets the layout-role and contribution-mass signals each provide an independent baseline. The empirically calibrated $w_L$ schedule and the trust-weighted decomposition of $\mathrm{NovMass}$ and $\mathrm{AsrtMass}$ are described in Appendix~\ref{apx:utility}.

\subsection{Semantic Gap as Costed Bridging}
\label{sec:gap}

We treat the gap between query semantics $\varphi_q$ and segment semantics $S(s)$ as an explicit alignment object. Let $\mathcal{A}_{q,s}$ be a \emph{finite} candidate set of admissible bridging assumptions enumerable from the abstraction map $\mathcal{M}_D$ and the controlled DNF expansion of $\varphi_q$ in Section~\ref{sec:opcalc}, and write $\mathrm{type}(a) \in \{\mathrm{expr},\mathrm{abs},\mathrm{intent},\mathrm{event}\}$ for the bridge family of $a \in \mathcal{A}_{q,s}$. The \emph{minimal bridging set} collects the assumptions required for segment plus background plus bridges to entail the query,
\begin{equation}
\begin{aligned}
A^{\star}_{q,s} = \arg\min_{A \subseteq \mathcal{A}_{q,s}}\;&\mathrm{Cost}(A), \\
\text{s.t.}\;&S(s) \cup B \cup A \vdash \varphi_q,
\end{aligned}
\label{eq:bridging-set}
\end{equation}

The associated cost of this set of operations defines the \emph{abstraction gap}, $\mathrm{Gap}(q,s) = \mathrm{Cost}(A^{\star}_{q,s})$, with the convention $\mathrm{Gap}(q,s) = +\infty$ when no feasible $A$ exists.

We require $\mathrm{Cost}: 2^{\mathcal{A}_{q,s}} \to \mathbb{R}_{\ge 0}$ to be non-negative with $\mathrm{Cost}(\emptyset)=0$, monotonic in inclusion, and to weight bridges by their (operational) type via $\omega(\mathrm{type}(a))$ with $\omega(\mathrm{expr}) \le \omega(\mathrm{abs}) \le \omega(\mathrm{intent}) \approx \omega(\mathrm{event})$ so that $\mathrm{Rel}$ in Section~\ref{sec:relevance} is monotone in $\mathrm{Cost}$. Finiteness of $\mathcal{A}_{q,s}$ and non-negativity make $\arg\min$ attainable. In our implementation we use the additive instantiation
\begin{equation}
\mathrm{Cost}(A) \;=\; \sum_{a \in A} \big(\omega(\mathrm{type}(a)) + \delta(a)\big),
\label{eq:cost-impl}
\end{equation}
where $\delta(a) \ge 0$ measures the distance or commitment a specific bridge introduces; full $\delta$ catalogs are in Appendix~\ref{apx:gap}, and alternative non-negative aggregators are admissible.

The four families partition $\mathcal{A}$. The \emph{expression gap} ($A^{\mathrm{expr}}$) covers same-referent surface variation at low cost; the \emph{conceptual gap} ($A^{\mathrm{abs}}$) covers paraphrase and conceptual-level shifts at medium cost; the \emph{intent-evidence gap} ($A^{\mathrm{intent}}$) covers inferences from specific evidence to a more general queried class, e.g., ``CTO of Anthropic'' to ``AI company representative''; the \emph{event-type gap} ($A^{\mathrm{event}}$) covers indirect event evidence through frames, nominalizations, and artifacts. The last two rely on typicality, so retrieval treats them as revisable hypotheses that still pay positive cost; the cost-budgeted expansion in Section~\ref{sec:opcalc} admits them only when the budget $\tau$ allows. Per-family $\delta(a)$ catalogs are in Appendix~\ref{apx:gap}.

\subsection{Operator Calculus: Indexing vs.\ Query Expansion}
\label{sec:opcalc}

The gap definition in Section~\ref{sec:gap} is declarative. To compute it, the method splits two operator families that share a single intermediate object. Document-side operators are query-independent transformations applied offline at indexing time; query-side operators are applied online and depend on the information need. Indexing applies conservative normalizations once and reuses them across all future queries, while query-side expansion remains closely aligned to user intent and is restricted to cost-bounded controlled disjunction.  

The two families share a single intermediate object, the minimal-context lifting operator $\mathsf{LiftCtx}$, which attaches a sufficient context $C^{\star}(s) = \langle E, T, \Sigma, \mathcal{E} \rangle$ to each segment $s$, binding entities ($E$), time ($T$), scope ($\Sigma$), and attribution ($\mathcal{E}$). The indexing pipeline composes $\mathsf{LiftCtx}$ into the eight-operator chain
\begin{align}
\mathsf{Idx}(D) =\; & \mathsf{BuildMap} \circ \mathsf{AbsEvent} \circ \mathsf{AbsConcept} \notag \\
                  & {} \circ \mathsf{LiftCtx} \circ \mathsf{NormTime} \notag \\
                  & {} \circ \mathsf{NormNE} \circ \mathsf{Coref} \circ \mathsf{Seg}_L(D),
                  \label{eq:idx-chain}
\end{align}

applied right-to-left from layout segmentation, coreference, named-entity canonicalization, and temporal normalization, through minimal-context lifting, concept abstraction, event abstraction, and the abstraction-map build (the order is fixed: $\mathsf{LiftCtx}$ requires entity and temporal binding to be in place; see Appendix~\ref{apx:opcalc}). After indexing, each segment $s$ carries its role $\mathrm{role}_L(s)$, utility $U(s\mid D)$, lifted context $C^{\star}(s)$, and canonical form $S^{\mathrm{can}}(s)$, while the document carries an abstraction map $\mathcal{M}_D$. We single out the document-side operators that are conservative.

\begin{thmframe}
\begin{proposition}[Conservativity of zero-cost canonicalization]
\label{thm:canonicalization-conservativity}
Let $T_0$ be a document-side operator on segment semantics satisfying, for all $S, B, A, \varphi$,
\begin{align*}
\text{\emph{(T1)}} \quad & S \cup B \cup A \,\vdash\, \varphi \\
& \quad \Longleftrightarrow\; T_0(S) \cup B \cup A \,\vdash\, \varphi, \\
\text{\emph{(T2)}} \quad & \mathrm{Cost}(A) \;\text{is invariant under}\; T_0.
\end{align*}
Then, writing $\widetilde{S}(s) = T_0(S^{\mathrm{can}}(s))$,
\[
\widetilde{\mathrm{Gap}}(q, s) \;=\; \mathrm{Gap}(q, s) \quad \forall \, (q, s).
\]
\end{proposition}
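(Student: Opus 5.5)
The plan is to show that the optimization problem defining $\widetilde{\mathrm{Gap}}(q,s)$ is literally the same problem as the one defining $\mathrm{Gap}(q,s)$. Two things must match: the feasible set and the objective, both taken over the same finite candidate set $\mathcal{A}_{q,s}$. Once both coincide, the minimum values coincide. This also covers the convention $+\infty$ when the feasible set is empty.

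First, fix $(q,s)$ and write $S = S^{\mathrm{can}}(s)$, which is the semantic content $\mathrm{Gap}$ is evaluated against after indexing. Define
\[
\mathcal{F} = \{A \subseteq \mathcal{A}_{q,s} : S \cup B \cup A \vdash \varphi_q\},
\]
and let $\widetilde{\mathcal{F}}$ be the analogous set with $T_0(S)$ in place of $S$. Instantiating (T1) at $S$, $B$, $\varphi = \varphi_q$ and at each $A \subseteq \mathcal{A}_{q,s}$ gives membership equivalence pointwise, hence $\mathcal{F} = \widetilde{\mathcal{F}}$.

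One point needs an explicit remark. The candidate set $\mathcal{A}_{q,s}$ is enumerated from $\mathcal{M}_D$ and the DNF expansion of $\varphi_q$, so I must confirm that applying $T_0$ does not change it. I would argue this as follows. Because $T_0$ is stated as an operator on segment semantics only, $\mathcal{M}_D$ and the query expansion are untouched. So the same finite candidate set indexes both problems. If the reader prefers, this can be taken as part of the setup of the statement, since (T1) quantifies over the same $A$ on both sides.

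Second, (T2) says $\mathrm{Cost}(A)$, including the $\delta(a)$ terms in Eq.~\eqref{eq:cost-impl}, is unchanged by $T_0$. So the objective function on $2^{\mathcal{A}_{q,s}}$ is identical in both problems. With identical finite feasible sets and identical objectives, two cases follow:
\begin{itemize}
\item If $\mathcal{F}\neq\emptyset$, the minimum exists by finiteness and non-negativity, and the two minima are equal.
\item If $\mathcal{F}=\emptyset$, both gaps are $+\infty$.
\end{itemize}
In either case $\widetilde{\mathrm{Gap}}(q,s)=\mathrm{Gap}(q,s)$. Note that the argmin sets $A^\star$ also coincide, though they need not be unique.

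The main obstacle is not computational but definitional. The work is in making precise that "invariant under $T_0$" in (T2) means the cost function used in the transformed problem equals the original one. That requires the $\delta(a)$ distances not to be recomputed against the transformed segment in a way that changes them. It also requires that $\mathcal{A}_{q,s}$ itself is unchanged. I would handle this by stating these readings explicitly at the start and noting that they are exactly what (T1)–(T2) encode. The remaining argument is then a two-line equality of argmins.
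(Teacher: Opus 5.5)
Your proposal is correct and follows essentially the same argument as the paper: (T1) makes the feasible families of bridge sets coincide, and (T2) makes their costs coincide, so the two minima agree. Your explicit treatment of the empty-feasible-set ($+\infty$) case and of the invariance of $\mathcal{A}_{q,s}$ under $T_0$ spells out points that the paper's proof leaves implicit.
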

\end{thmframe}

\noindent
Alias resolution, temporal normalization, and similar truth-preserving zero-cost lifts can therefore move to the index without changing retrieval semantics; the proof is in Appendix~\ref{apx:theorems}. Non-conservative bridges carry positive cost and stay query-side under $\tau$. The conditions on $C^{\star}$ and Algorithm~\ref{alg:index-canonicalization} are in Appendix~\ref{apx:opcalc}.

A query is first mapped into the canonical space of the index by $\mathsf{NormQuery}(\varphi_q)$ and then expanded into a cost-bounded disjunction
\begin{equation}
\mathrm{Expand}_{\tau}(\varphi_q) \;=\; \bigvee_{i\,:\,\mathrm{Cost}(c_i) \le \tau}\; \bigwedge_j \ell_{ij},
\label{eq:expand}
\end{equation}
where each clause $c_i = \bigwedge_j \ell_{ij}$ is a typed-bridging alternative, each literal $\ell_{ij}$ is one bridge application drawn from $\mathcal{A}_{q,s}$, and the per-clause cost $\mathrm{Cost}(c_i) = \sum_j \big(\omega(\mathrm{type}(\ell_{ij})) + \delta(\ell_{ij})\big)$ reuses the additive aggregator of Eq.~\eqref{eq:cost-impl}. The budget $\tau$ makes the gap typology a runtime control: low $\tau$ admits only expression-level expansion, higher $\tau$ admits abstraction and event bridges. The full procedure is in Algorithm~\ref{alg:query-alignment} and Appendix~\ref{apx:opcalc}.

\subsection{Relevance Integration}
\label{sec:relevance}

The three components combine multiplicatively into the operational relevance score
\begin{equation}
\begin{aligned}
\mathrm{Rel}(q, s) \;=\;& \mathrm{Match}(q, s) \cdot U(s \mid D) \\
&\cdot \exp\!\big({-}\mathrm{Gap}(q, s)\big),
\end{aligned}
\label{eq:relevance}
\end{equation}
where $\mathrm{Match}: \mathcal{Q} \times \mathcal{S} \to [0,1]$ is a clause-coverage functional realized as either a hard entailment indicator or a soft coverage ratio, with the two realizations specified in Appendix~\ref{apx:matching}; $U(s \mid D) \in \mathbb{R}_{\ge 0}$ is the utility prior from Section~\ref{sec:utility}; and $\exp\!\big({-}\mathrm{Gap}(q, s)\big)$ is a multiplicative discount that decays smoothly with bridge cost.

\begin{thmframe}
\begin{lemma}[Dominance under higher utility and lower gap]
\label{thm:dominance}
For any query $q$ and segments $s_1, s_2$, if
\begin{align*}
\mathrm{Match}(q, s_1) &\,\ge\, \mathrm{Match}(q, s_2), \\
U(s_1 \mid D) &\,\ge\, U(s_2 \mid D), \\
\mathrm{Gap}(q, s_1) &\,\le\, \mathrm{Gap}(q, s_2),
\end{align*}
then $\mathrm{Rel}(q, s_1) \;\ge\; \mathrm{Rel}(q, s_2)$.
\end{lemma}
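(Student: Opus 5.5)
The plan is to show that $\mathrm{Rel}(q,\cdot)$ in Eq.~\eqref{eq:relevance} is a product of three factors, each non-negative and each monotone in the corresponding hypothesis, and then chain the three inequalities. First I record the signs. $\mathrm{Match}(q,s)\in[0,1]$ by definition. $U(s\mid D)\in\mathbb{R}_{\ge 0}$ by Eq.~\eqref{eq:utility}. For $\exp(-\mathrm{Gap}(q,s))$, non-negativity of $\mathrm{Cost}$ gives $\mathrm{Gap}\in[0,+\infty]$. I adopt the convention $\exp(-\infty)=0$ for the infeasible case, so this factor lies in $[0,1]$.

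Second, monotonicity of the discount. Since $t\mapsto e^{-t}$ is non-increasing on $[0,+\infty]$ under the extended convention, the hypothesis $\mathrm{Gap}(q,s_1)\le\mathrm{Gap}(q,s_2)$ gives $\exp(-\mathrm{Gap}(q,s_1))\ge\exp(-\mathrm{Gap}(q,s_2))$. This also covers the edge cases. If $\mathrm{Gap}(q,s_2)=+\infty$, the right-hand side is $0$ and the inequality is trivial. If $\mathrm{Gap}(q,s_1)=+\infty$, then $\mathrm{Gap}(q,s_2)=+\infty$ as well.

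Third, I chain the factors using the elementary fact that for non-negative reals $a_1\ge a_2\ge 0$, $b_1\ge b_2\ge 0$ and $c_1\ge c_2\ge 0$ we have $a_1b_1c_1\ge a_2b_1c_1\ge a_2b_2c_1\ge a_2b_2c_2$. Each step multiplies an inequality by a non-negative quantity, and the non-negativity of the $s_1$ factors ensures the intermediate products do not flip sign. With $a=\mathrm{Match}$, $b=U$ and $c=\exp(-\mathrm{Gap})$, this yields $\mathrm{Rel}(q,s_1)\ge\mathrm{Rel}(q,s_2)$.

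The only real obstacle is bookkeeping rather than analysis. I must make the non-negativity of all three factors explicit, since without it the product ordering fails. I must also handle $\mathrm{Gap}=+\infty$ cleanly through the extended-real convention. No earlier result such as Proposition~\ref{thm:canonicalization-conservativity} is needed. I would add a remark that strict inequality follows when one hypothesis is strict and the remaining factors of $s_2$ (or of $s_1$) are positive.
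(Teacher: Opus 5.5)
Your proof is correct and is essentially the paper's argument. You write $\mathrm{Rel}(q,s)$ as the product of $\mathrm{Match}(q,s)$, $U(s\mid D)$, and $\exp(-\mathrm{Gap}(q,s))$, note that each factor is non-negative and ordered by its hypothesis (since $\exp(-\cdot)$ is decreasing), and multiply the three inequalities. Your explicit telescoping chain and the convention $\exp(-\infty)=0$ for infeasible segments only make rigorous what the paper leaves implicit. Your strictness condition, positivity of the remaining factors, is also more careful than the paper's side remark: its condition (positive $\mathrm{Match}\cdot U$ at $s_2$) does not give strictness when $\mathrm{Gap}(q,s_1)=\mathrm{Gap}(q,s_2)=+\infty$.
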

\end{thmframe}

\noindent
This fixes a partial ranking order; reversals contradicting it indicate a model or implementation error, with the proof in Appendix~\ref{apx:theorems}. Document-level relevance aggregates segment scores under one of three choices per experiment: the default top-$k$ mean, top-$1$, and log-sum-exp.

\subsection{Reflective Refinement}
\label{sec:refinement}

Reflective refinement updates the abstraction operators governing indexing and query expansion in response to typed retrieval failures. The evolving system carries the state $\Xi_i = \langle \mathcal{O}_i, \Pi_i, \mathcal{M}_i, \Theta_i \rangle$, where $\mathcal{O}$ collects document-side indexing operators, $\Pi$ collects query-side prompt or policy builders decomposed into typed stages, $\mathcal{M}$ is the abstraction map, and $\Theta$ governs the critic, localization, and refinement judges. Each prompt has four slots; only the rules and the few-shot examples are editable, and edits are appended to a persistent policy store so the next query benefits without re-deriving the patch.

The critic returns a structured failure object $f_{q,s} = \langle q, s, \varphi_q, A^{\star}_{q,s}, \mathrm{Gap}(q,s), \kappa_{q,s} \rangle$, where $\kappa_{q,s}$ records unmet literals, dominant gap terms, ambiguity flags, and over-expansion indicators. The proposer responds with a stage-localized patch $u_i = \langle t, \Delta_t \rangle$, where $t$ names the stage and $\Delta_t$ is a minimal change to that stage's prompt, policy, or rule set.

Three constraints address two risks: noisy judges and unconstrained drift.
\begin{enumerate}[leftmargin=1.5em,itemsep=2pt,topsep=2pt]
  \item \emph{Typed localization.} Each failure is attributed to a stage $\mathrm{loc}(\text{failure}) \in \mathcal{S}$, where $\mathcal{S} = \{\pi_{\mathrm{con}}, \pi_{\mathcal{M}}, \pi_{\mathrm{event}}, \pi_{\mathrm{query}}\}$ and the named-entity and synonym/paraphrase substages are merged into $\pi_{\mathcal{M}}$ in the implementation. A patch always names the stage it edits.
  \item \emph{Costed changes.} Patch cost reuses the gap typology of Section~\ref{sec:gap}: high-risk operators carry higher patch cost than low-risk ones, and a bounded budget limits patch attempts and operator complexity so rule sets cannot grow without control.
  \item \emph{Monotonic acceptance.} Write $\mathrm{Suff}(q,D;\Xi) \in \{0,1\}$ for the critic's binary verdict that the top-$K$ retrieved evidence under $\Xi$ supports a complete answer, and $\mathrm{Comp}(q,D;\Xi) \in \mathbb{R}_{\ge 0}$ for the top-$1$ segment relevance as a concentration proxy, which is higher when the top-$1$ segment dominates the top-$K$ pool. Throughout, $\Delta X(\Xi_i \!\to\! \Xi_{i+1}) := X(\cdot;\Xi_{i+1}) - X(\cdot;\Xi_i)$ denotes the signed change of any signal $X$ when the state moves from $\Xi_i$ to $\Xi_{i+1}$. A candidate patch is accepted only if both
  \begin{align}
  \Delta \mathrm{Suff}(q, D ; \Xi_i \!\to\! \Xi_{i+1}) &\ge 0, \label{eq:monotonic-suff}\\
  \Delta \mathrm{Comp}(q, D ; \Xi_i \!\to\! \Xi_{i+1}) &\ge 0, \label{eq:monotonic-comp}
  \end{align}
  hold on an episode set that contains the triggering failure plus a control of preserved cases; Appendix~\ref{apx:refinement-schema} routes $\Delta\mathrm{Suff}$ to the control half and $\Delta\mathrm{Comp}$ to the trigger half. The sufficiency criterion \eqref{eq:monotonic-suff} prevents regression on solved cases; the compression control \eqref{eq:monotonic-comp} penalizes over-expansion and reliance on high-cost bridging.
\end{enumerate}

Algorithm~\ref{alg:reflective-rag} states the runtime procedure; the judge split, mutation router, control-overlap proxy, and on-disk policy store are in Appendix~\ref{apx:refinement-schema}.

\begin{algorithm}[t]
\caption{AbstRAG with stage-local refinement.}
\label{alg:reflective-rag}
\footnotesize
\begin{algorithmic}[1]
\Require query $q$, document $D$, background $B$, state $\Xi_i = \langle \mathcal{O}_i, \Pi_i, \mathcal{M}_i, \Theta_i\rangle$
\Ensure ranked segments $R$, updated state $\Xi_{i+1}$
\State Index $D$ and build canonical query under $\Xi_i$, producing $I_D$, $U$, and $Q$
\State Rank segments by $\mathrm{Rel}(q,s)$ from Eq.~\eqref{eq:relevance}; let $R$ be the resulting top-$K$
\State \textbf{if} critic $\theta_{\mathrm{critic}}$ rejects $R$, propose a stage-localized patch $u_i$; accept and apply it to obtain $\Xi_{i+1}$ only if Eqs.~\eqref{eq:monotonic-suff}--\eqref{eq:monotonic-comp} both hold; otherwise $\Xi_{i+1} \gets \Xi_i$
\State \Return $R$, $\Xi_{i+1}$
\end{algorithmic}
\end{algorithm}

\section{Experiments}
\label{sec:experiments}

\begin{table*}[t]
  \centering
  \footnotesize
  \setlength{\tabcolsep}{3pt}
  \caption{Retrieval (Suff@5, nDCG@10) and generation accuracy on three within-document benchmarks; all values $\times 100$, per-column winners in bold. Self-RAG and CRAG are within-document instantiations using the same backbone. Per-class splits, dataset-native set-F1 sanity checks, and per-system $n$ counts are in Appendix~\ref{apx:e2e-baselines}.}
  \label{tab:main}
  % rowcolors removed
  \resizebox{\textwidth}{!}{%
  \begin{tabular}{l ccc ccc ccc}
    \toprule
    & \multicolumn{3}{c}{SciFact ($n{=}323$)} & \multicolumn{3}{c}{FEVEROUS ($n{=}250$)} & \multicolumn{3}{c}{QASPER ($n{=}250$)} \\
    \cmidrule(lr){2-4} \cmidrule(lr){5-7} \cmidrule(lr){8-10}
    & \multicolumn{2}{c}{\textit{retrieval}} & \textit{generation} & \multicolumn{2}{c}{\textit{retrieval}} & \textit{generation} & \multicolumn{2}{c}{\textit{retrieval}} & \textit{generation} \\
    Method & Suff@5 & nDCG@10 & 3-cls Acc & Suff@5 & nDCG@10 & 3-cls Acc & Suff@5 & nDCG@10 & Ans-F1 \\
    \midrule
    BM25 \cite{robertson2009bm25}                & 76.2 & 45.5 & 78.3 & 37.2 & 58.7 & 53.2 & 46.0 & 28.6 & 30.4 \\
    Dense \cite{warner2024modernbert}                & 85.4 & 50.3 & 81.1 & 26.0 & 51.5 & 46.0 & 51.6 & 34.9 & 33.3 \\
    CE-Rerank \cite{xiao2023cpack}               & 84.5 & 51.0 & 80.5 & 27.6 & 55.0 & 48.8 & 50.8 & 31.8 & 30.2 \\
    HyDE \cite{gao2023hyde}                      & 86.1 & 51.9 & 80.8 & 28.0 & 51.8 & 49.2 & 50.8 & 37.8 & 31.9 \\
    IRCoT \cite{trivedi2023ircot}         & 70.3 & 37.6 & 81.4 & 21.2 & 39.7 & 51.2 & 38.0 & 18.5 & 30.8 \\
    Self-RAG \cite{asai2023selfrag}              & \textbf{87.6} & 51.0 & 81.7 & 31.6 & 45.5 & 39.6 & 55.2 & 37.0 & 31.4 \\
    CRAG \cite{yan2024crag}                      & 78.9 & 46.5 & 78.3 & 37.6 & 57.2 & 50.4 & 52.8 & 36.1 & 33.1 \\
    \midrule
    AbstRAG (ours) & 87.0 & \textbf{55.7} & \textbf{83.6} & \textbf{38.8} & \textbf{59.1} & \textbf{58.4} & \textbf{56.8} & \textbf{43.1} & \textbf{37.3} \\
    \bottomrule
  \end{tabular}}
\end{table*}

\begin{figure*}[t]
  \centering
  \includegraphics[width=0.98\textwidth]{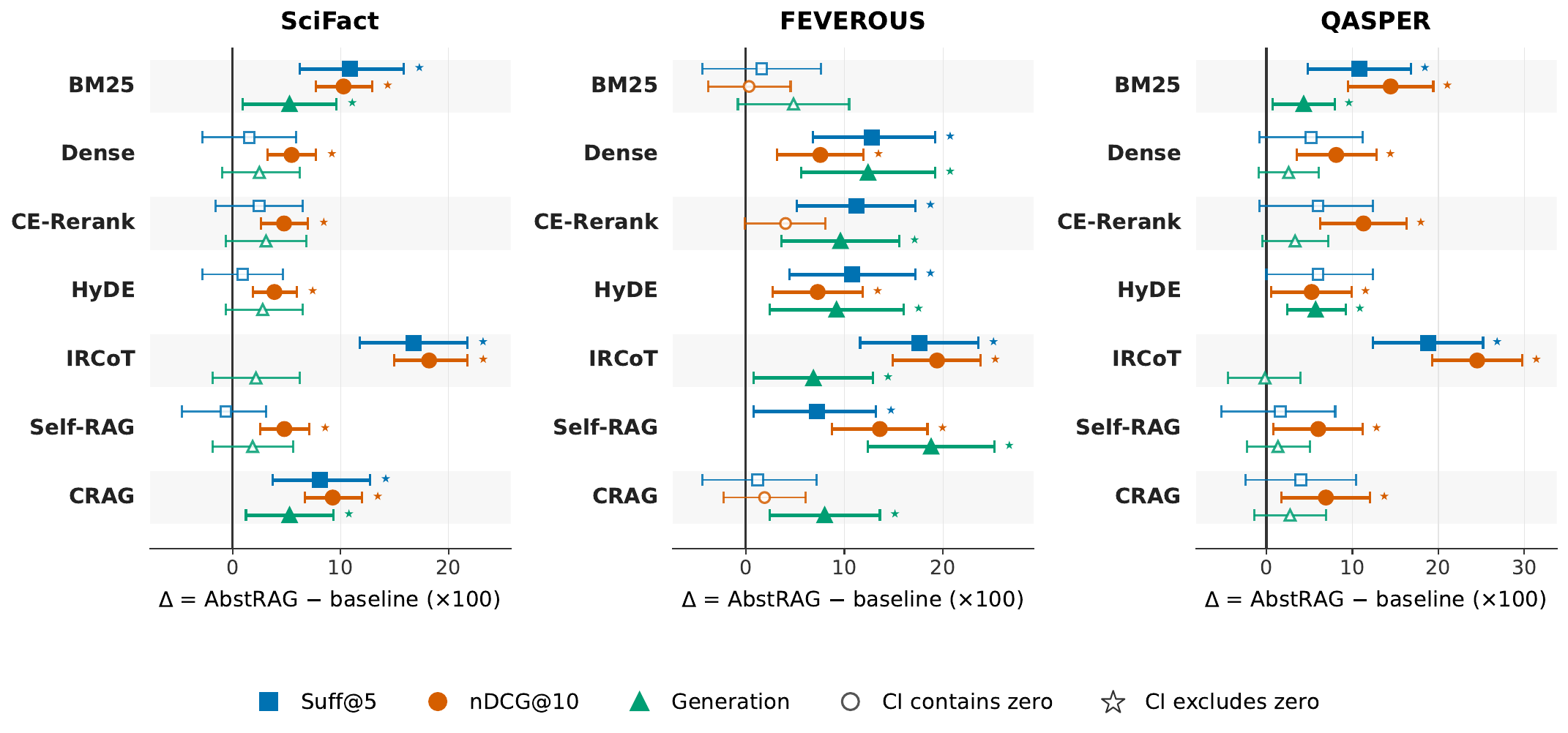}
  \caption{Paired-bootstrap CIs of AbstRAG minus each baseline on Suff@5 (blue square), nDCG@10 (orange circle), and generation accuracy (green triangle), shown separately for SciFact, FEVEROUS, and QASPER. Filled markers with a star indicate CIs that exclude zero; open markers indicate CIs that contain zero. AbstRAG leads on $18$ of $21$ nDCG@10 contrasts; the three exceptions are all on FEVEROUS and are discussed in Section~\ref{sec:negative}.}
  \label{fig:cross-ci-forest}
\end{figure*}

\subsection{Setup}
\label{sec:setup}

\paragraph{Datasets.}
We evaluate on three benchmark families chosen to exercise different aspects of within-document retrieval. \textbf{SciFact} \cite{wadden2020scifact} is section-aware scientific evidence retrieval ($323$ dev claims, single cited document per claim); \textbf{FEVEROUS} \cite{aly2021feverous} is metadata-routed page-local retrieval ($250$-case subset balanced across challenge types); \textbf{QASPER} \cite{dasigi2021qasper} is single-document scientific QA ($250$-case subset balanced across answer types). Per-family target mechanisms are summarized in Appendix~\ref{apx:datasets}. In addition, we collect a small \emph{refinement-stress slice} sampled from SciFact and FEVEROUS failure seeds plus solved controls. %this slice is a diagnostic for the compression guard rather than a benchmark, and we keep it separate from the three main families throughout.

\paragraph{Baselines.}
Seven zero-shot baselines under matched single-document candidate pools span the standard retrieval families: BM25; Dense with \texttt{gte-modernbert-base} \cite{warner2024modernbert}; cross-encoder rerank with \texttt{BAAI/bge-reranker-base} \cite{xiao2023cpack} over BM25 top-$k$; HyDE \cite{gao2023hyde}; IRCoT \cite{trivedi2023ircot}; Self-RAG \cite{asai2023selfrag}; and CRAG \cite{yan2024crag}. Graph-augmented multi-document systems and supervised dataset-native pipelines are excluded as incompatible with the single-document zero-shot scope; full rationale is in Appendix~\ref{apx:baselines}.

\paragraph{Metrics.}
Per dataset we report one standard set-F1 metric and two retrieval metrics aligned with the bridge mechanism: Sufficiency@$K$ and nDCG@10 \cite{jarvelin2002cumulated}; the formal definitions are in Appendix~\ref{apx:metrics}. Generation accuracy is $3$-class label accuracy for SciFact / FEVEROUS and token-level Answer-F1 for QASPER, computed by feeding each system's top-$3$ retrieved evidence into the answer generator.

\paragraph{Experimental setup.}
All systems use \texttt{gpt-5.4-mini-2026-03-17} under matched prompt templates (Appendix~\ref{apx:prompts}), judges, and decoding controls; benchmark-specific instantiation of $B$ is in Appendix~\ref{apx:utility}, and tuning protocol and hyperparameter defaults are in Appendix~\ref{apx:hyperparams}. The always-on operator suite supports the empirical claims below, with $\mathsf{ExpRole}$ and $\mathsf{ExpTime}$ demonstrated separately on the toy-$B$ sanity check; see Appendix~\ref{apx:toyb}.

\paragraph{Statistical protocol.}
All AbstRAG-vs-baseline contrasts are case-aligned paired-bootstrap CIs ($B{=}10{,}000$); ``CI excludes zero'' is equivalent to a $95\%$-confidence claim that AbstRAG outperforms the baseline.

\subsection{Main Results}
\label{sec:main-results}

Table~\ref{tab:main} reports retrieval and generation accuracy of AbstRAG against the seven baselines on SciFact, FEVEROUS, and QASPER; Figure~\ref{fig:cross-ci-forest} reports the paired-bootstrap CIs against each baseline. We observe two main findings.

\paragraph{AbstRAG outperforms every baseline on top-$5$ retrieval by moving gold spans from ranks $6$--$10$ into the top-$5$.}
AbstRAG raises Suff@5 by $+1.2$ over CRAG on FEVEROUS and $+1.6$ over Self-RAG on QASPER, the two non-saturated datasets: each point of Suff@5 corresponds to one query where a gold span moved from ranks $6$--$10$ into the top-$5$. The same effect carries to ranking quality, where AbstRAG leads every nDCG@10 column in Table~\ref{tab:main} and $18$ of $21$ paired-bootstrap CIs exclude zero, as the orange circles in Figure~\ref{fig:cross-ci-forest} show, so the top-$5$ improvement is a consistent system-level property.

\paragraph{Retrieval gains improve generation most when answers depend on selecting local page elements.}
Feeding each system's top-$3$ retrieval into the answer generator carries the retrieval gains over to generation accuracy: AbstRAG leads the next-best baseline by $+1.9\%$ on SciFact, $+5.2\%$ on FEVEROUS, and $+4.0\%$ on QASPER. The FEVEROUS and QASPER gaps are larger because page-local element selection and top-$3$ answer extraction increase the relative value of correctly ranking gold evidence. The gain is statistically significant on $10$ of $21$ system--dataset contrasts, with FEVEROUS the strongest setting where $6$ of its $7$ baselines reach significance; per-baseline CIs are in Appendix~\ref{apx:e2e-baselines}.

\subsection{Removing Reflective Refinement Isolates the Mechanism}
\label{sec:ablation}

Table~\ref{tab:ablation} reports the paired-CI ablation on SciFact ($n{=}322$) and FEVEROUS ($n{=}250$); \emph{w/o refinement} turns off the refinement loop with the rest of the pipeline fixed. We report retrieval metrics because refinement acts directly on the retrieval stage and generator variance dilutes the ablation signal. We observe:

\paragraph{Refinement helps more when baseline Suff@10 is far from its dataset maximum, and less when it is already near that maximum.}
On FEVEROUS, where baseline Suff@10 is $41.6$, w/o refinement loses on every metric and drops $9.2$ Suff@10 and $6.7$ nDCG@10; on SciFact, where the sentence-level pool is already near a $\sim\!93\%$ Suff@10 ceiling, the loss shrinks to $-1.6$ Suff@10 and $-2.0$ nDCG@10.

\begin{table}[t]
  \centering
  \footnotesize
  \setlength{\tabcolsep}{2pt}
  \caption{Ablation of reflective refinement on SciFact and FEVEROUS; values $\times 100$. The $\Delta$ column reports (w/o refine) $-$ (Ours), so negative values indicate that removing refinement degrades the metric. $^{\star}$ marks contrasts whose CI excludes zero (all $4$ contrasts). \emph{Ours} is this table's reference run; it differs slightly from Table~\ref{tab:main} because LLM calls are independent.}
  \label{tab:ablation}
  % rowcolors removed
  \resizebox{\columnwidth}{!}{%
  \begin{tabular}{lcc cc}
    \toprule
    & \multicolumn{2}{c}{SciFact ($n{=}322$)} & \multicolumn{2}{c}{FEVEROUS ($n{=}250$)} \\
    \cmidrule(lr){2-3} \cmidrule(lr){4-5}
    Metric & Ours & $\Delta$ w/o refine & Ours & $\Delta$ w/o refine \\
    \midrule
    Suff@10      & 93.2 & $-1.6^{\star}\,[-3.1,-0.3]$ & 41.6 & $-9.2^{\star}\,[-13.6,-5.2]$ \\
    nDCG@10      & 54.9 & $-2.0^{\star}\,[-3.7,-0.5]$ & 54.7 & $-6.7^{\star}\,[-9.7,-3.9]$ \\
    \bottomrule
  \end{tabular}}
\end{table}

\paragraph{Refinement triggers more often where evidence needs abstraction beyond surface matching.}
The refinement loop activates on $2\%$ of SciFact, $19\%$ of QASPER, and $41\%$ of FEVEROUS cases (Appendix~\ref{apx:ablation-fine}), tracking evidence difficulty: SciFact and QASPER allow surface matching for most claims; FEVEROUS's metadata-routed page-element evidence requires bridging the initial ranking does not produce.

\subsection{Compression Control Suppresses Over-Expansion}
\label{sec:stress}

On a small refinement-stress slice (Appendix~\ref{apx:refinement-stress}), we contrast Ours against \emph{w/o sufficiency control} and \emph{w/o compression control}. We observe:

\paragraph{Sufficiency and compression controls block distinct errors that the other cannot catch.}
Ours and the two ablations agree on Suff@10 ($73.7$), nDCG@10 ($37.7$), and preserved-case rate ($100$), with zero regressions. Dropping the compression control $\Delta\mathrm{Comp}\!\ge\!0$ is the only configuration that separates from Ours: the over-expansion false-positive rate jumps from $0\%$ to $73.7\%$, isolating over-expansion as the error that only the compression control catches. The sufficiency control plays the symmetric role on FEVEROUS, where removing it costs $9.2$ Suff@10 (Section~\ref{sec:ablation}) and compression alone does not block the regressions.

\subsection{Per-Family Breakdown and Negative Result}
\label{sec:negative}

Table~\ref{tab:strat} reports the per-family nDCG@10 breakdown of AbstRAG against the strongest non-IRCoT baseline. We observe:

\begin{table}[t]
  \centering
  \footnotesize
  \setlength{\tabcolsep}{4pt}
  \caption{nDCG@10 of AbstRAG against the strongest non-IRCoT baseline, broken down by query family. The $\Delta$ column reports (AbstRAG) $-$ (best baseline). The Group column lists evidence labels for SciFact and FEVEROUS, and answer types for QASPER.}
  \label{tab:strat}
  % rowcolors removed
  \begin{tabular}{llrrrr}
    \toprule
    Family & Group & $n$ & Ours & Best & $\Delta$ \\
    \midrule
    \multirow{2}{*}{SciFact}  & SUPPORTS    & 138 & 86.4 & 83.0 & $+3.4$ \\
                              & REFUTES     &  71 & 85.7 & 78.8 & $+6.9$ \\
    \midrule
    \multirow{2}{*}{FEVEROUS} & SUPPORTS    & 119 & 70.1 & 73.1 & $-3.1$ \\
                              & REFUTES     & 114 & 49.0 & 42.6 & $+6.4$ \\
    \midrule
    \multirow{3}{*}{QASPER}   & extractive  & 130 & 56.5 & 51.1 & $+5.4$ \\
                              & abstractive &  50 & 56.9 & 45.7 & $+11.2$ \\
                              & yes\_no     &  40 & 40.1 & 34.6 & $+5.5$ \\
    \bottomrule
  \end{tabular}
\end{table}

\paragraph{AbstRAG's gain depends on the evidence link a query family needs; the one negative is budget overrun, not operator failure.}
AbstRAG leads on $6$ of $7$ families with gold support. The largest gains come from non-surface links: QASPER abstractive ($+11.2$) needs aggregating scattered text; SciFact and FEVEROUS REFUTES ($+6.9$, $+6.4$) match negated claims through role and event bridging. The exception FEVEROUS SUPPORTS ($-3.1$) crosses two bridge types whose summed cost exceeds the budget $\tau$; lifting $\tau$ would restore it at the cost of noisier expansions.

\section{Conclusion}
\label{sec:conclusion}

AbstRAG makes reflective refinement a first-class retrieval object: typed bridging operators edited under sufficiency and compression controls, supported by a utility prior, a costed semantic gap, and an indexing--query operator calculus. Together these make abstraction and revision explicit retrieval objects with computable cost, giving a shared vocabulary for diagnosing retrieval failure.

\section{Limitations}
\label{sec:limitations}

We see three open directions that the present formulation does not yet cover. The theory is deliberately scoped to within-document retrieval support, so multi-document multi-hop reasoning, open-web verification, and domains with weak layout or discourse signals remain out of reach; extending the bridge calculus and the document-local abstraction map $\mathcal{M}_D$ across documents is a natural next step that we expect to interact in non-trivial ways with the cost calibration of typed operators. A second direction concerns the background resource $B$, which is empty on all three reported benchmarks because none ships an official paired knowledge base; the toy-$B$ sanity check in Appendix~\ref{apx:toyb} verifies that AbstRAG behaves correctly when $B$ is non-empty, while a full instantiation with domain-scale taxonomies, schema dictionaries, or curated event ontologies would extend the effect of role-to-type and time-window operators from the motivating example to the main benchmarks. Reflective refinement itself is bounded but not globally convergent: the acceptance rule guarantees non-regression on the control set, but does not guarantee that a long sequence of accepted patches reaches a fixed point. A formal convergence guarantee, alongside an incremental update rule for $\mathcal{M}_D$ during refinement cycles, would close the gap between the current bounded-budget treatment and a fully convergent refinement loop.

\section*{Acknowledgments}

This work was partially funded by the Swiss National Science Foundation (SNSF) projects RATIONAL and M-RATIONAL.

\bibliography{references}

\appendix

\section{Notation and Preliminaries}
\label{apx:preliminaries}

Extending the notation of Section~\ref{sec:setup-motivating}, each segment $s \in \mathrm{Seg}(D)$ yields typed semantic items
\[
\mathcal{I}(s)=\mathcal{C}(s)\cup\mathcal{P}(s)\cup\mathcal{K}(s)\cup\mathcal{E}(s),
\]
where:
\begin{itemize}[leftmargin=*,itemsep=0.25em]
  \item $\mathcal{C}(s)$ are \textbf{claims} (assertions, causal statements, conditionals).
  \item $\mathcal{P}(s)$ are \textbf{properties} (measurements, constraints, comparisons, attribute-value facts).
  \item $\mathcal{K}(s)$ are \textbf{concepts} (definitions, disambiguations, schema/ontology mappings).
  \item $\mathcal{E}(s)$ are \textbf{evidence/attribution} objects (citations, study descriptors, sources).
\end{itemize}

The method does not depend on this four-way split as a strict typing; the operator calculus operates on a generic canonical semantic set $S^{\mathrm{can}}(s)$ derived from $\mathcal{I}(s)$ during indexing.

\section{Utility Prior: Full Construction}
\label{apx:utility}

The main-body presentation in Section~\ref{sec:utility} introduces the additive form of the query-independent prior $U(s \mid D) = \alpha \, U^{L}(s) + U^{\mathrm{contrib}}(s)$ with $U^{L}(s) = w_L(\mathrm{role}_L(s) \mid \mathrm{genre}, \mathrm{domain})$ and $U^{\mathrm{contrib}}(s) = \mathrm{NovMass}(s) \cdot \mathrm{AsrtMass}(s)$. This appendix gives the full sub-decomposition of $\mathrm{NovMass}$ and $\mathrm{AsrtMass}$ and the calibration of the layout-role weight schedule $w_L$.

\paragraph{Novelty mass.}
$\mathrm{NovMass}$ is a trust-weighted sum of novelty contributions across the typed items of $s$, normalized per document so the output lies in $[0, 1]$:
\begin{align*}
\mathrm{NovMass}^{\mathrm{raw}}(s) &= \!\!\!\sum_{x \in \mathcal{C}(s) \cup \mathcal{P}(s)} \!\!\!\mathrm{Nov}(x \mid B)\, \mathrm{Trust}(x), \\
\mathrm{NovMass}(s) &= \mathrm{NovMass}^{\mathrm{raw}}(s) / Z_{\mathrm{Nov}},
\end{align*}
where $Z_{\mathrm{Nov}} = \max_{s' \in \mathrm{Seg}(D)} \mathrm{NovMass}^{\mathrm{raw}}(s')$.
$\mathrm{Nov}(x \mid B) \in [0,1]$ measures how much $x$ is not entailed by background resources $B$ and $\mathrm{Trust}(x)$ scales by attribution strength (citation, source descriptor, evidence type). Items already entailed by $B$ contribute zero novelty regardless of phrasing. The per-document max normalization expresses the segment's novelty contribution as a share of the most novel segment in the same document, consistent with the single-document retrieval scope of Section~\ref{sec:relevance}.

\paragraph{Assertedness mass.}
$\mathrm{AsrtMass}$ measures whether content is framed as a conclusion or asserted result vs. contextual setup, also normalized per document so the output lies in $[0, 1]$:
\begin{align*}
\mathrm{AsrtMass}^{\mathrm{raw}}(s) &= \sum_{c \in \mathcal{C}(s)} \mathrm{Asrt}(c)\, \mathrm{Trust}(c), \\
\mathrm{AsrtMass}(s) &= \mathrm{AsrtMass}^{\mathrm{raw}}(s) / Z_{\mathrm{Asrt}},
\end{align*}
where $Z_{\mathrm{Asrt}} = \max_{s' \in \mathrm{Seg}(D)} \mathrm{AsrtMass}^{\mathrm{raw}}(s')$,
where $\mathrm{Asrt}(c) \in [0,1]$ is high for claims framed assertively (``we show'', ``we find'', ``X causes Y'') and low for contextual or hedged statements. The product form $\mathrm{NovMass}(s) \cdot \mathrm{AsrtMass}(s)$ penalizes both extremes: high novelty without assertion (idle observation) and high assertion without novelty (review of known facts).

\paragraph{Layout-role weight schedule.}
The schedule $w_L(\mathrm{role}_L(s) \mid \mathrm{genre}, \mathrm{domain})$ is configurable. The default schedule for scientific articles, empirically calibrated from a centrality rubric applied to three reference ML papers (see Appendix~\ref{apx:hyperparams}), places \textsf{results} and \textsf{methods} at the top, with \textsf{discussion} close behind and \textsf{conclusion} and \textsf{background} in the lowest band; for engineering or systems documents, the schedule inverts the methods--results ordering so that the design/implementation contribution dominates the benchmark numbers. Calibration evidence, including per-section ratings, is reported in Appendix~\ref{apx:hyperparams}.

\paragraph{When the prior misfires.}
The prior is approximate by construction. It can overfocus on conventionally salient sections when the contribution is atypically located, and it can underweight evidence that lives in tables or footnotes when those regions are not given dedicated layout roles. Section~\ref{sec:limitations} records these residual risks, and the cross-genre robustness check in Appendix~\ref{apx:datasets} provides cases for evaluating these failures during design.

\paragraph{Epistemic profiles and trust weights.}
Following defeasible-reasoning intuitions \cite{pollock1987defeasible,reiter1980default,nute1994defeasible}, we assign each statement extracted from a segment one of six epistemic profiles: \emph{established} (non-defeasible consensus), \emph{strong\_claim} (the author's empirically-supported claim, used as the default when no marker is present), \emph{hypothesis} (explicitly tentative claim), \emph{cited\_external} (claim attributed to others), \emph{disputed} (mixed evidence in literature), and \emph{defeated} (retracted or refuted). The trust weight is $\mathrm{Trust}(x) = w_{\mathrm{trust}}(\mathrm{profile}) \cdot \mathrm{conf}(x)$, where the profile-conditioned weights $w_{\mathrm{trust}} = (1.0, 0.9, 0.7, 0.5, 0.4, 0.0)$ for (established, strong\_claim, cited\_external, hypothesis, disputed, defeated) reflect decreasing robustness to retraction: only \emph{established} is non-defeasible, \emph{cited\_external} applies a uniform discount because the original source quality is not observable, and \emph{defeated} contributes no trust. The assertedness weights $w_{\mathrm{asrt}} = (1.0, 1.0, 0.6, 0.5, 0.3, 0.2)$ follow the same tier with a steeper drop for unsuccessful claims, since these claims still appear in the text but contribute weakly to the assertedness mass.

\paragraph{Background resources $B$: benchmark instantiation.}
The framework accepts a typed background resource $B$ assembled from alias dictionaries, time anchors, role taxonomies, event schemas, and an optional corpus-derived novelty term set; the \texttt{BackgroundResources} interface is documented in the codebase. Our three evaluation benchmarks (SciFact, FEVEROUS, QASPER) do not ship an official paired knowledge base, and substituting an ad-hoc external KB (e.g.,\ UMLS \cite{bodenreider2004umls} for SciFact or Wikidata \cite{vrandecic2014wikidata} for FEVEROUS) would mix the framework gain with KB-coverage and KB-curation effects, so we instantiate $B = \emptyset$ across all reported runs. Under empty $B$, the novelty term $\mathrm{Nov}(x \mid B) = 1$ for every statement, $\mathrm{NovMass}(s)$ degenerates to a trust-weighted aggregation, and the role-to-type / time-window expansion operators $\mathsf{ExpRole}$ and $\mathsf{ExpTime}$ have empty input and produce no variants; the rest of the operator suite operates over the document-local abstraction map $\mathcal{M}_D$ and remains active. A $30$-case toy-$B$ sanity check in Appendix~\ref{apx:toyb} confirms that $\mathsf{ExpRole}$ activates as intended when $B$ provides a role taxonomy; production-scale plug-ins are left to follow-up work.

\section{Semantic Gap: Per-Type Catalog and Distance Terms}
\label{apx:gap}

The main-body presentation in Section~\ref{sec:gap} retains the typed cost model and one inline example per gap type. This appendix lists the full per-type catalog and the explicit distance-term derivations.

\paragraph{Expression gap $A^{\mathrm{expr}}$.}
Same referent or concept, different surface form. Examples:
\begin{itemize}[leftmargin=*,itemsep=0.2em]
  \item named entity aliases, acronyms, alternative spellings (e.g.\ ``Anthropic'' $\equiv$ ``AI startup Anthropic PBC'');
  \item temporal expressions normalized to intervals (``last year'' relative to a document date; ``Q1 2015'');
  \item local coreference and mention linking (``he'' $\to$ ``the CEO'').
\end{itemize}
Distance term: $\delta_{\mathrm{expr}}(a) = \mathrm{LexicalDistance}(\text{surface}_1, \text{surface}_2)$, e.g.\ normalized edit distance or Jaccard over character n-grams, plus a small fixed cost for the alias entry itself.

\paragraph{Conceptual gap $A^{\mathrm{abs}}$.}
Different conceptual levels or paraphrases (the symbol $\mathrm{abs}$ is kept for backward compatibility with the implementation). Examples:
\begin{itemize}[leftmargin=*,itemsep=0.2em]
  \item synonymy or near-synonymy of nominal phrases (``automobile'' $\equiv$ ``car'');
  \item hypernym/hyponym (``Anthropic CTO'' $\sqsubseteq$ ``tech executive'');
  \item attribute abstraction (``European region'' membership; category unions).
\end{itemize}
Distance term: $\delta_{\mathrm{abs}}(a) = \mathrm{TaxonomicDepth}(c_1, c_2)$ measured as the depth of the least-common-ancestor in the relevant taxonomy, plus a defeasibility penalty when the bridge is one-directional.

\paragraph{Intent--evidence gap $A^{\mathrm{intent}}$.}
Queries often ask about a general class while documents provide specific instances. The bridging rule has the form
\[
\mathrm{Role}(x, \rho) \Rightarrow \mathrm{Type}(x, \tau),
\]
e.g.\ holding office $\rho =$ ``CTO of Anthropic'' implies $\tau =$ ``AI company representative''. Distance term: $\delta_{\mathrm{intent}}(a) = \mathrm{Surprisal}(\rho \to \tau)$ as the negative log-likelihood of the role-to-type rule under the local membership prior, plus an additional penalty when the rule is context-sensitive (e.g.\ ``AI company'' membership of a research lab without commercial products).

\paragraph{Event-type gap $A^{\mathrm{event}}$.}
Information needs phrased as event types are encoded indirectly in documents through frames, nominalizations, and artifacts:
\begin{itemize}[leftmargin=*,itemsep=0.2em]
  \item lexical frames (``held talks'', ``signed'', ``negotiated'');
  \item nominalizations (``talks'', ``deal'', ``agreement'');
  \item artifacts implying events (an ``agreement'' implies a signing event).
\end{itemize}
The bridging rule has the form
\begin{align*}
\mathrm{Frame}(a, b, t) \Rightarrow \exists e :\;& \mathrm{EventType}(e) \\
                                      & {}\wedge \mathrm{participants}(e, a, b) \\
                                      & {}\wedge \mathrm{time}(e, t).
\end{align*}
Distance term: $\delta_{\mathrm{event}}(a)$ measures the structural mismatch between the frame schema and the event schema as the count of unfilled or coerced slot bindings, plus a high defeasibility penalty for artifact-only implicature.

\paragraph{Cost weight ordering.}
The intended ordering $\omega_{\mathrm{expr}} < \omega_{\mathrm{abs}} < \omega_{\mathrm{intent}} \approx \omega_{\mathrm{event}}$ is calibrated on the development split (cf.\ Appendix~\ref{apx:hyperparams}). The uniform-cost ablation (\emph{w/o typed cost}) in Appendix~\ref{apx:ablation-fine} verifies that the gain is driven by the typing of the weights, beyond their mere presence.

\paragraph{Defeasibility.}
Bridges in $A^{\mathrm{intent}}$ and $A^{\mathrm{event}}$ are usually justified by typicality. The cost model treats this as an additive penalty: a revisable bridge can still be selected when the budget allows and no cheaper bridge succeeds, but it accumulates extra cost so that the retrieval model prefers non-revisable alternatives whenever they exist.

\paragraph{Implementation constants.}
In our implementation, the four distance terms are realized as pure functions with the following defaults:
$\delta_{\mathrm{expr}} = \mathrm{NormalizedEditDistance}(s_1, s_2) + 0.05 \cdot \mathbb{1}[\text{alias entry}]$;
$\delta_{\mathrm{abs}} = \mathrm{TaxonomicDist}(c_1, c_2) + 0.15 \cdot \mathbb{1}[\text{unidirectional}]$, where $\mathrm{TaxonomicDist}$ resolves in three tiers, WordNet Wu--Palmer distance ($1 - \mathrm{wup}$) when both endpoints have a noun synset, embedding cosine distance when an embedding model is supplied, and a baseline of $0.30$ otherwise;
$\delta_{\mathrm{intent}} = -\log P(\tau \mid \rho)$ capped at $2.0$, with $P(\tau \mid \rho)$ supplied by an LLM-as-judge call (or a baseline of $0.40$ without LLM access), plus $0.20$ when the rule is context-sensitive;
$\delta_{\mathrm{event}} = 0.10 \cdot |\text{unfilled slots}| + 0.30 \cdot \mathbb{1}[\text{artifact-only}]$. Total bridge cost is $\omega_t + \delta_t$ as specified in the main text. These constants are tunable parameters; calibration on dev splits is a future ablation.

\paragraph{Implementation basis for each distance term.}
The four distance terms compose a literature-grounded primary metric with a small set of project-specific additive penalties. The primary metric of $\delta_{\mathrm{expr}}$ is Levenshtein edit distance \cite{levenshtein1966binary} normalized by the longer of the two surfaces, computed by a standard dynamic-programming routine; the alias-entry constant $0.05$ is a project default representing the fixed bookkeeping cost of consulting an alias dictionary. The primary metric of $\delta_{\mathrm{abs}}$ is the Wu--Palmer taxonomic similarity \cite{wu1994verb} as exposed by WordNet \cite{miller1995wordnet} via NLTK \cite{bird2009nltk}; the embedding-cosine fallback and the constant-$0.30$ baseline form a three-tier resolution scheme that we use so that the implementation does not fail on out-of-WordNet concepts. The primary metric of $\delta_{\mathrm{intent}}$ is the Shannon surprisal $-\log P(\tau \mid \rho)$ \cite{shannon1948mathematical}; the probability is supplied by an LLM-as-judge call, and the cap $2.0$ together with the offline baseline $0.40$ keep the cost bounded when the judge is unavailable or returns near-zero. The structure of $\delta_{\mathrm{event}}$ is our own composition of an additive slot-mismatch term and a fixed artifact-only penalty; we adopt the additive form because it follows the FrameNet-style coercion accounting in event-detection work \cite{baker1998framenet} and keeps the metric monotone in unfilled slot count. All numeric constants ($0.05$, $0.15$, $0.20$, $0.30$, $0.10$, $0.30$, cap $2.0$, baseline $0.40$) are initial project choices to be calibrated; we report a sensitivity sweep over each constant in Appendix~\ref{apx:hyperparams}.

\section{Operator Calculus: Full Development}
\label{apx:opcalc}

\subsection{Context Object}

A \emph{context} is a 4-tuple
\[
C = \langle E, T, \Sigma, \mathcal{E} \rangle,
\]
where $E$ is the \textbf{entity binding environment} mapping mentions and variables to canonical entities (including coreference resolution); $T$ is the \textbf{temporal anchor}, a time point or interval used to normalize relative times and indexicality; $\Sigma$ is the \textbf{scope or condition store} (section-level conditions, local assumptions, discourse restrictions); and $\mathcal{E}$ is the \textbf{epistemic / evidence profile} covering attribution, modality, uncertainty, and evidence strength. We reserve $\Theta$ and $\Xi$ for the refinement-state notation introduced in Section~\ref{sec:refinement} and Appendix~\ref{apx:refinement-schema}.

\paragraph{Segment semantics.}
Each segment $s$ yields a set $S(s)$ of interpreted statements (claims, typed properties, event descriptions, attribution-bearing evidence). Items may be represented as natural-language strings, semi-structured records, or fully symbolic forms. The only requirement is that each statement admit a canonical semantics against which satisfaction can be tested. We write $(W, C) \models \psi$ for $\psi \in S(s)$ to mean that $\psi$ is truth-evaluable in world-model $W$ relative to context $C$.

\paragraph{Sufficient and minimal contexts.}
A context $C$ is \emph{sufficient} for $s$ if every formula $\psi \in S(s)$ is truth-evaluable under $C$, i.e., $(W, C) \models \psi$ is defined for every such $\psi$. Write $\mathcal{C}_{\mathrm{suff}}(s)$ for the set of contexts sufficient for $s$. A unique strict minimum may not exist, so we define the minimal-information projection
\[
C^{\star}(s) = \pi\!\left( \bigcap_{C \in \mathcal{C}_{\mathrm{suff}}(s)} C \right),
\]
where $\pi$ extracts only the parts required to evaluate $S(s)$ and the intersection is over compatible fields (the greatest lower bound under an information ordering).

\paragraph{Context lifting.}
The \emph{context-lifting} operator attaches a minimal sufficient context to every segment:
\[
\mathsf{LiftCtx}(D) = \{ \langle s, C^{\star}(s) \rangle : s \in \mathrm{Seg}(D) \}.
\]
This is the shared intermediate object linking indexing-time canonicalization and query-time expansion in Section~\ref{sec:opcalc}.

\paragraph{Implementation status.}
The truth-evaluation predicate $(W, C) \models \psi$ is realized as an LLM-as-judge call with $n = 3$ multi-sample at temperature $0.3$; the binary verdict is the majority vote and the missing-information set is the deduplicated union across samples (\texttt{truth\_eval.py}). The world model $W$ is a facts-only dataclass (\texttt{world\_model.py}); no LLM is embedded inside the data structure. The minimal sufficient context $C^{\star}(s)$ is constructed by greedy top-down pruning: drop the lowest-confidence \texttt{FieldValue}, accept the drop only if the pruned context is still evaluable under $\theta_{\mathrm{truth\_eval}}$, repeat until no field can be removed within a configurable judge-call budget (\texttt{minimize\_context.py}, all under \texttt{src/reflective\_rag/}). We do not enumerate $\mathcal{C}_{\mathrm{suff}}(s)$ exhaustively because the lattice size is exponential in the number of $C$-fields; the greedy procedure returns one deterministic minimal element under the priority ordering, which suffices for the downstream $\mathsf{LiftCtx}$ contract.

\subsection{Document-Side Indexing Operators}

Let $\mathcal{T}_D$ be a set of document transformation operators, each mapping a document representation to a new representation:
\[
t : \mathcal{D} \to \mathcal{D}, \quad t \in \mathcal{T}_D.
\]
Core operators include:
\begin{itemize}[leftmargin=*,itemsep=0.25em]
  \item \textbf{Layout segmentation} $\mathsf{Seg}_L$: produce $\mathrm{Seg}(D)$ and assign $\mathrm{role}_L(s)$ from layout and discourse cues.
  \item \textbf{Coreference substitution} $\mathsf{Coref}$: resolve anaphora and unify mentions in the entity binding environment $E$.
  \item \textbf{Named-entity canonicalization} $\mathsf{NormNE}$: map aliases and acronyms to canonical entities.
  \item \textbf{Temporal normalization} $\mathsf{NormTime}$: normalize relative or implicit time expressions using the temporal anchor $T$.
  \item \textbf{Context lifting} $\mathsf{LiftCtx}$: attach minimal sufficient contexts $C^{\star}(s)$ (cf.\ Appendix~D.1).
  \item \textbf{Concept abstraction} $\mathsf{AbsConcept}$: align nominal phrases to ontology concepts with disambiguated senses.
  \item \textbf{Event abstraction} $\mathsf{AbsEvent}$: map frames, nominalizations, and artifacts to event types with explicit argument structure.
\end{itemize}

\paragraph{Design properties.}
These operators are amortized cost (done once per document), stable semantics (canonicalization reduces superficial mismatch), and enable principled truth conditions via context lifting. They avoid the query-time combinatorial explosion that arises when the same normalization is performed independently for every query.

\subsection{Canonical Index Composition}

The abstraction-map builder $\mathsf{BuildMap}$ packages the canonicalized items into the document-local abstraction map $\mathcal{M}_D$ (canonical entity ids, normalized time, typed roles, typed events). The full indexing pipeline composes operators in the order
\begin{align*}
\mathsf{Idx}(D) =\; & \mathsf{BuildMap} \circ \mathsf{AbsEvent} \circ \mathsf{AbsConcept} \\
                  & {} \circ \mathsf{LiftCtx} \circ \mathsf{NormTime} \\
                  & {} \circ \mathsf{NormNE} \circ \mathsf{Coref} \circ \mathsf{Seg}_L(D).
\end{align*}
The index provides, for each segment $s$, a canonical semantic set $S^{\mathrm{can}}(s)$ together with its minimal context $C^{\star}(s)$.

\paragraph{Document-local abstraction map.}
The pipeline also produces an abstraction map $\mathcal{M}_D$ that caches the small set of abstraction moves the document actually licenses, e.g.\ alias edges such as ``Anthropic'' $\leftrightarrow$ ``AI startup'', role edges such as ``CTO'' $\to$ ``executive'', office edges such as ``CTO of Anthropic'' $\to$ ``AI company representative'', and event edges such as ``visit'' or ``signed agreement'' $\to$ ``meeting / collaboration''. The map is document-local, which keeps the abstraction surface auditable and the canonicalization cost bounded.

\subsection{Algorithm: Index-Side Canonicalization}

\begin{algorithm}[H]
\caption{Index-side canonicalization and abstraction induction}
\label{alg:index-canonicalization}
\small
\begin{algorithmic}[1]
\Require document $D$, background resources $B$, layout and discourse analyzers
\Statex \textbf{State:} $\mathrm{Seg}(D)$, $\mathrm{role}_L(s)$, $U(s\mid D)$, $C^{\star}(s)$, $S^{\mathrm{can}}(s)$, $\mathcal{M}_D$
\Ensure canonical index $\mathsf{Idx}(D)$ with segment records and abstraction map $\mathcal{M}_D$
\State segment $D$ with $\mathsf{Seg}_L$ and assign layout roles $\mathrm{role}_L(s)$
\State estimate the query-independent utility prior $U(s\mid D)$ from layout and discourse signals
\ForAll{$s \in \mathrm{Seg}(D)$}
\State resolve coreference and canonical entity mentions in $s$
\State normalize temporal expressions and scope cues
\State construct the minimal sufficient context $C^{\star}(s)$
\State lift nominal concepts, roles, and event mentions into canonical semantic items
\State attach source links and confidence to every lifted or normalized item
\EndFor
\State induce a conservative abstraction map $\mathcal{M}_D$ over aliases, role types, concept levels, and event schemas
\State package $\langle s, \mathrm{role}_L(s), U(s\mid D), C^{\star}(s), S^{\mathrm{can}}(s) \rangle$ into segment records
\State \textbf{return} $\mathsf{Idx}(D) = \langle \mathcal{M}_D, \{ (s, U(s\mid D), C^{\star}(s), S^{\mathrm{can}}(s)) \} \rangle$
\end{algorithmic}
\end{algorithm}

\subsection{Query-Side Expansion Operators}

Let $\mathcal{T}_Q$ be a set of query transformation operators, each mapping a query form to a \emph{set} of alternative query forms (capturing disjunction):
\[
e : \mathcal{Q} \to 2^{\mathcal{Q}}, \quad e \in \mathcal{T}_Q.
\]
Core operators include:
\begin{itemize}[leftmargin=*,itemsep=0.25em]
  \item \textbf{Alias / acronym expansion} $\mathsf{ExpAlias}$ (expression gap).
  \item \textbf{Synonym and nominal paraphrase expansion} $\mathsf{ExpSyn}$ (conceptual gap).
  \item \textbf{Hypernym / hyponym controlled expansion} $\mathsf{ExpHyp}$ (conceptual gap; heavily costed because of one-directional entailment).
  \item \textbf{Role-to-type expansion} $\mathsf{ExpRole}$ (intent--evidence gap; defeasible).
  \item \textbf{Event-type expansion} $\mathsf{ExpEvent}$ (event gap; from event classes to frame and artifact variants).
  \item \textbf{Time-window expansion} $\mathsf{ExpTime}$ (temporal underspecification; bounded by the document's temporal anchor $T$).
\end{itemize}

\paragraph{Risks.}
Query-side operators are inherently more risky than document-side ones because they introduce assumptions the indexed document never made. Three errors recur: \emph{semantic drift} when expansions add unintended senses; \emph{overgeneration} when disjunctions expand the candidate space combinatorially; and \emph{judgment dependence} when role-to-type and event-frame inferences rely on context-sensitive decisions. These risks motivate the cost-budgeted disjunction in Appendix~D.6.

\subsection{DNF Expansion with Cost Budget}

\paragraph{Canonical query.}
Let $\mathsf{NormQuery}(\varphi_q)$ map the query semantics into the same canonical semantic space as $\mathsf{Idx}(D)$. Most factual queries decompose into a conjunction of literals (constraints, type predicates, relations):
\[
\varphi_q = \bigwedge_{j=1}^{m} \ell_j.
\]

\paragraph{Expansion as DNF.}
Expression-level and abstraction-level alternatives naturally induce disjunction. We define an expanded query as a disjunction of conjunctions (DNF):
\begin{equation}
\mathrm{Expand}(\varphi_q) = \bigvee_{i=1}^{k} \left( \bigwedge_{j=1}^{m_i} \ell_{ij} \right),
\end{equation}
where each clause $i$ is a coherent alternative interpretation (entity alias choices, concept paraphrases, event-frame variants).

\paragraph{Costed expansion.}
Each clause is associated with an expansion cost summed over the literals it contains, following the additive aggregator of Eq.~\eqref{eq:cost-impl}:
\[
\mathrm{Cost}\!\left( \bigwedge_j \ell_{ij} \right) = \sum_{j} \big(\omega(\mathrm{type}(\ell_{ij})) + \delta(\ell_{ij})\big).
\]
Controlled expansion restricts the disjunction to clauses under a budget $\tau$:
\[
\mathrm{Expand}_{\tau}(\varphi_q) = \bigvee_{i\,:\,\mathrm{Cost}(\text{clause } i) \le \tau} \left( \bigwedge_j \ell_{ij} \right).
\]

\paragraph{Worked example.}
For the Anthropic--TSMC query of Section~\ref{sec:setup-motivating}, a low-cost clause may align \emph{LLM company} to \emph{Anthropic} via alias plus role expansion and \emph{chip supplier} to \emph{TSMC} via affiliation, contributing $\omega(\mathrm{expr}) + \omega(\mathrm{intent})$ plus the corresponding $\delta(\cdot)$ terms. A higher-cost clause additionally invokes event-type bridges such as \emph{visited} $\to$ \emph{collaboration}, adding $\omega(\mathrm{event})$ plus its $\delta$. The budget $\tau$ therefore decides whether retrieval stays near direct paraphrase and alias matching or admits the more revisable event-support clauses.

\paragraph{Implementation note.}
The cost-budgeted expansion $\mathrm{Expand}_{\tau}(\varphi_q)$ is implemented as \texttt{expand\_dnf} with a default budget $\tau = 1.5$. Each ExpandedClause carries its accumulated bridge cost ($\omega + \delta$ summed across the operators that produced it), and the output clause list is sorted in ascending cost order so that downstream $\mathrm{Gap}$ minimization can short-circuit at the first satisfying clause. The original $\varphi_q$ is always retained at clauses[0] with cost $0$.

\subsection{Algorithm: Query-Side Controlled Expansion and Retrieval Scoring}

\begin{algorithm}[H]
\caption{Query-side controlled expansion and retrieval scoring}
\label{alg:query-alignment}
\small
\begin{algorithmic}[1]
\Require query $q$, canonical index $\mathsf{Idx}(D)$, abstraction map $\mathcal{M}_D$, budget $\tau$, background resources $B$
\Statex \textbf{State:} $\varphi_q$, $\mathrm{Expand}_{\tau}(\varphi_q)$, $\mathrm{Gap}(q,s)$, $\mathrm{Match}(q,s)$, $\mathrm{Rel}(q,s)$
\Ensure ranked retrieval support set for $q$
\State map $q$ into canonical query form $\varphi_q$ compatible with $\mathsf{Idx}(D)$
\State generate alias, synonym, type, event, and temporal alternatives in canonical space
\State prune alternatives by operator cost to obtain $\mathrm{Expand}_{\tau}(\varphi_q)$
\ForAll{$s \in \mathrm{Seg}(D)$ with indexed record in $\mathsf{Idx}(D)$}
\State test whether $S^{\mathrm{can}}(s) \cup B$ entails a clause of $\mathrm{Expand}_{\tau}(\varphi_q)$
\State compute the residual semantic gap $\mathrm{Gap}(q,s)$ when entailment fails directly
\State derive $\mathrm{Match}(q,s)$ from clause coverage or entailment strength
\State score the segment with $\mathrm{Rel}(q,s)$ via Eq.~\eqref{eq:relevance}
\EndFor
\State aggregate high-scoring segments subject to binding, scope, and bridge-cost compatibility
\State \textbf{return} ranked support segments with matched clauses, contexts, and gap traces
\end{algorithmic}
\end{algorithm}

\section{Matching Semantics}
\label{apx:matching}

Section~\ref{sec:relevance} treats matching as canonical entailment with abductive fallback. This appendix states the full hard-satisfaction and graded-matching definitions.

\paragraph{Hard satisfaction by entailment.}
A canonical segment satisfies a query clause if
\[
S^{\mathrm{can}}(s) \cup B \vdash \bigwedge_j \ell_{ij}.
\]
The expanded query is satisfied if \emph{any} clause is entailed:
\[
S^{\mathrm{can}}(s) \cup B \vdash \mathrm{Expand}_{\tau}(\varphi_q),
\]
i.e.\ there exists a clause index $i$ such that $S^{\mathrm{can}}(s) \cup B \vdash \bigwedge_j \ell_{ij}$.

\paragraph{Preorder view (graded matching).}
Define a preorder $\preceq$ on formulas by reverse entailment:
\[
\psi \preceq \varphi \;\iff\; \varphi \vdash \psi.
\]
Then ``$\varphi$ is at least as informative as $\psi$''. Query satisfaction becomes ``the segment entails a sufficiently informative clause''. Partial matching is modeled by maximizing a coverage functional over entailed subformulas, which lets the relevance score interpolate between exact entailment and complete mismatch.

\paragraph{Abductive alignment (gap).}
When canonical entailment fails, the matcher uses bridging assumptions instead. In operator terms, gap minimization corresponds to finding a minimal-cost set of transformations or assumptions such that entailment holds:
\[
\begin{aligned}
\mathrm{Gap}(q, s) = \min_{A \subseteq \mathcal{A}}\;& \mathrm{Cost}(A) \\
\text{s.t.}\;& S^{\mathrm{can}}(s) \cup B \cup A \vdash \varphi_q,
\end{aligned}
\]
or equivalently, as a minimal-cost operator sequence transforming $\varphi_q$ into a form entailed by $S^{\mathrm{can}}(s)$. The relevance score $\mathrm{Rel}(q, s)$ in Section~\ref{sec:relevance} combines the match value with $\exp(-\mathrm{Gap})$ so that partial entailment with a small residual gap is preferred to exact entailment of an irrelevant clause.

\paragraph{Implementation note.}
The hard-satisfaction relation $\vdash$ is realized in our implementation by an LLM-as-judge call ($\theta_{\mathrm{entailment}}$) that takes the segment's canonical statements together with $B$ and the canonical query literals, and returns a per-literal Boolean array. Conjunctive satisfaction is taken as the AND over the array; partial matching for the preorder view is taken as the coverage ratio. Without an LLM client the function returns conservatively False to avoid false-positive entailment claims. This LLM-as-judge realization is consistent with the abductive reading: the model is asked to judge the existence of an inferential bridge.

\section{Reflective Refinement: Full Schema}
\label{apx:refinement-schema}

The main-body presentation in Section~\ref{sec:refinement}, summarized in Algorithm~\ref{alg:reflective-rag}, keeps the parameter state $\Xi$ at the component-name level. This appendix gives the sub-tuple decompositions, the failure-object schema, and the patch-object schema.

\paragraph{Prompt collection $\Pi$.}
The prompts that parameterize the typed operators decompose into the editable-stage set $\mathcal{S}$ introduced in Section~\ref{sec:refinement}. In the implementation we use
\[
\Pi = \langle \pi_{\mathrm{con}}, \pi_{\mathcal{M}}, \pi_{\mathrm{event}}, \pi_{\mathrm{query}} \rangle,
\]
where $\pi_{\mathrm{con}}$ governs context construction and lifting; $\pi_{\mathcal{M}}$ merges named-entity canonicalization and synonym/paraphrase control into the abstraction-map updater; $\pi_{\mathrm{event}}$ governs event abstraction; and $\pi_{\mathrm{query}}$ governs the query-side controlled-expansion stage. Each component prompt is a separately addressable patch target.

\paragraph{Judge collection $\Theta$.}
The judges that govern critic, localization, and refinement decompose as
\[
\Theta = \langle \theta_{\mathrm{explain}}, \theta_{\mathrm{localize}}, \theta_{\mathrm{refine}}, \theta_{\mathrm{critic}} \rangle.
\]
$\theta_{\mathrm{critic}}$ evaluates whether the current retrieval episode satisfies the query; $\theta_{\mathrm{explain}}$ produces the human-readable diagnostic trace $\kappa_{q,s}$; $\theta_{\mathrm{localize}}$ chooses the stage $t$ to which the failure is attributed; $\theta_{\mathrm{refine}}$ proposes the patch $u_i$ that the acceptance gate then evaluates.

\paragraph{Failure object schema.}
A failure object carries six fields:
\begin{align*}
f_{q,s} = \langle\, & q,\, s,\, \varphi_q,\, A^{\star}_{q,s}, \\
                   & \mathrm{Gap}(q,s),\, \kappa_{q,s} \,\rangle.
\end{align*}
The diagnostic trace $\kappa_{q,s}$ is itself structured: it records unmet literals (which clause failed to be entailed), dominant gap terms (which bridge type contributed the most cost), ambiguity flags (which canonicalization step left an unresolved mention), and over-expansion indicators (which clauses inflated the candidate pool without contributing entailed segments).

\paragraph{Patch object schema.}
A patch
\[
u_i = \langle t, \Delta_t, \pi^{\mathrm{edit}}_t \rangle
\]
specifies a target stage $t \in \mathcal{S}$ (instantiated as $\{\pi_{\mathrm{con}}, \pi_{\mathcal{M}}, \pi_{\mathrm{event}}, \pi_{\mathrm{query}}\}$ above), a minimal sample-local change $\Delta_t$, and an optional persistent prompt-edit $\pi^{\mathrm{edit}}_t$ (the $\pi^{\mathrm{edit}}_t$ slot extends the two-tuple form in Section~\ref{sec:refinement}). $\Delta_t$ is itself typed: an alias addition, a hypernym restriction, a role-to-type rule addition, an event-frame mapping addition, a temporal expansion-window adjustment, or a clause-cost adjustment. $\pi^{\mathrm{edit}}_t$, when present, is one of \texttt{add\_rule} or \texttt{add\_few\_shot} against the stage's mutable policy slots and carries a \texttt{base\_hash} pointing at the main-file content it was forked from (used for conflict detection at promotion). The size of $\Delta_t$ is bounded by the refinement budget; the same patch object format (including \texttt{base\_hash}, \texttt{validation\_status}, and \texttt{promotion\_status} fields) is preserved in the audit log so accepted and rejected patches share the same provenance schema.

\paragraph{Acceptance routine and corpus-retrieval proxy.}
The acceptance gate evaluates $\Xi_i \to \Xi_{i+1}$ on an episode set $E_{q,s} \cup E_{\mathrm{control}}$, where $E_{q,s}$ holds the triggering failure plus structurally similar failures and $E_{\mathrm{control}}$ samples previously solved cases. A patch is accepted only when both $\Delta\mathrm{Suff} \ge 0$ on $E_{\mathrm{control}}$ and $\Delta\mathrm{Comp} \ge 0$ on $E_{q,s}$; a bounded budget caps patch attempts and operator complexity per episode. In a corpus-retrieval evaluation, $\Delta\mathrm{Suff}$ is supplied by $\theta_{\mathrm{critic}}$, so acceptance never consults held-out labels, and a top-$K$ overlap between the candidate rerank and the unrefined retrieval substitutes for $E_{\mathrm{control}}$ replay because patches here alter only the per-query refined query.

\paragraph{Mutation router and on-disk policy store.}
Each accepted patch is classified into one of six effect kinds --- edge addition (into $\mathcal{M}_D$), query rewrite, bridge-weight adjustment, temporal-window adjustment, prompt edit (into \texttt{configs/prompts/}), or unclassified no-op --- and the first five accumulate in a refinement session state for downstream attribution. Prompt edits write an atomic backup to \texttt{configs/prompts/.backup/} on acceptance and are promoted at session end after a \texttt{base\_hash} check rejects writes over human-edited main files. The acceptance gate is implemented in \texttt{evaluate\_acceptance}; the corpus-retrieval pipeline uses the analogous \texttt{run\_corpus\_refinement} with the $\theta_{\mathrm{critic}}$ signal and the top-$K$ overlap proxy.

\paragraph{Per-family scoping, cross-case voting, and write serialization.}
The on-disk policy store is partitioned per benchmark family. Each family owns a subdirectory at \texttt{configs/prompts/<f>/} that overrides a shared baseline at \texttt{configs/prompts/\_base/}; reads fall back to the baseline when a family has no override, and writes are confined to the family directory. Every candidate edit must also accumulate $k$ distinct case votes (default $k = 3$) before becoming eligible for promotion, and an eligible candidate is promoted only after a replay over the session's already-solved cases shows zero regression; rejected candidates persist in the audit log alongside the accepted ones. The supported edit ops include \texttt{add\_rule}, \texttt{add\_few\_shot}, \texttt{remove\_rule}, and \texttt{remove\_few\_shot}, so refinement can retract as well as introduce rules. Per-stage backup writes, promotions, and candidate-log appends are serialized by an advisory file lock.

\section{Theoretical Results: Proofs}
\label{apx:theorems}

Two formal results are stated in the main text (Proposition~\ref{thm:canonicalization-conservativity} and Lemma~\ref{thm:dominance}). Both address the two main mechanism questions in the paper: when canonicalization is genuinely safe, and when one candidate segment must outrank another under the multiplicative relevance law.

\paragraph{Axioms for $T_0$.}
Proposition~\ref{thm:canonicalization-conservativity} treats $T_0$ as any document-side rewrite of segment semantics, $\widetilde{S}(s) = T_0(S^{\mathrm{can}}(s))$, that satisfies the following two properties:
\begin{enumerate}[leftmargin=2.6em,itemsep=1pt,topsep=2pt,label=(T\arabic*)]
  \item \label{ax:tp-entail} \emph{Entailment-preserving on the retrieval language.} For every segment-semantics set $S$, background $B$, bridge set $A$, and retrieval-language formula $\varphi$,
  \[
  S \cup B \cup A \,\vdash\, \varphi \quad \Longleftrightarrow \quad T_0(S) \cup B \cup A \,\vdash\, \varphi.
  \]
  \item \label{ax:tp-zerocost} \emph{Zero-cost.} For every $A$, $\mathrm{Cost}(A)$ computed with respect to $\widetilde{S}(s)$ equals $\mathrm{Cost}(A)$ computed with respect to $S^{\mathrm{can}}(s)$, i.e.\ the cost aggregator is invariant under $T_0$.
\end{enumerate}
Write $\widetilde{\mathrm{Gap}}(q,s)$ for the semantic gap recomputed with $\widetilde{S}(s)$ in place of $S^{\mathrm{can}}(s)$.

\begin{proof}[Proof of Proposition~\ref{thm:canonicalization-conservativity}]
By~\ref{ax:tp-entail}, the feasibility constraint $S^{\mathrm{can}}(s) \cup B \cup A \vdash \varphi_q$ is satisfied by exactly the same family of bridge sets $A \subseteq \mathcal{A}_{q,s}$ as the constraint $\widetilde{S}(s) \cup B \cup A \vdash \varphi_q$. By~\ref{ax:tp-zerocost}, each feasible $A$ carries the same $\mathrm{Cost}(A)$ in both optimizations. Hence the minimum over the common feasible family is identical: $\widetilde{\mathrm{Gap}}(q,s) = \mathrm{Gap}(q,s)$.
\end{proof}

\paragraph{Instantiation in the implementation.}
Alias resolution, temporal normalization, and coreference substitution are realized as logically equivalent rewrites (each statement entails the rewritten version and vice versa, holding $E$, $T$, and $\mathcal{E}$ fixed), satisfying~\ref{ax:tp-entail} by construction. They are tagged \texttt{cost\_class = zero} in the implementation, satisfying~\ref{ax:tp-zerocost}. Non-conservative operators carry positive cost and are not eligible for the index-side route under the proposition.

\noindent
\emph{Interpretation.} The theorem is the formal justification for moving alias resolution, temporal normalization, and similarly conservative document-side lifts into the index. The point is not computational convenience alone: these operations can stabilize the representation without changing what counts as supporting evidence for the retrieval task. Non-conservative operators (hypernym expansion, role-to-type, event abstraction) violate the zero-cost premise, so we keep them query-side under $\tau$.

\begin{proof}[Proof of Lemma~\ref{thm:dominance}]
By Eq.~\eqref{eq:relevance}, $\mathrm{Rel}(q,s)$ is the product of $\mathrm{Match}(q,s)$, $U(s \mid D)$, and $\exp(-\mathrm{Gap}(q,s))$.
The assumptions give Match $\ge$, $U \ge$, and (since $\exp(-\cdot)$ is strictly decreasing) $\exp(-\mathrm{Gap}) \ge$ across $s_1$ vs $s_2$. Multiplying the three nonnegative inequalities gives $\mathrm{Rel}(q,s_1) \ge \mathrm{Rel}(q,s_2)$. The inequality is strict whenever at least one premise is strict and the baseline product $\mathrm{Match} \cdot U$ at $s_2$ is positive.
\end{proof}

\noindent
\emph{Interpretation.} The relevance law induces a genuine partial dominance order: a segment that is no worse on match, no worse on utility prior, and no worse on semantic gap should not rank below its competitor. This makes the scoring function a structured partial order, and clarifies what kinds of ranking reversals should be read as model or implementation errors.

\paragraph{Implementation status.}
Lemma~\ref{thm:dominance} is regression-tested by unit tests in \texttt{tests/test\_relevance.py}; the \texttt{truth\_preserving} flag carried by every canonicalization edge produced by Phase 2--3 operators realizes the zero-cost premise of Proposition~\ref{thm:canonicalization-conservativity} (edges marked truth-preserving carry \texttt{cost\_class = zero}).

\section{Benchmark Family Summary and Dataset Criteria}
\label{apx:datasets}

We retain three benchmark families plus one stress slice, each chosen to exercise a different mechanism in AbstRAG. SciFact tests the utility prior over scientific discourse roles, since claims must be matched to evidence located in specific paper sections (results, methods, conclusion); FEVEROUS tests metadata routing and context lifting on page-local evidence, since gold spans are cells or sentences inside Wikipedia infoboxes and tables; QASPER tests within-paper question answering with mixed paragraph and table evidence, since answers depend on aggregating scattered text. The stress slice is a small stability probe for the compression control that we collected for this diagnostic, drawn from SciFact and FEVEROUS failures plus solved controls and held out from refinement patch design and acceptance tuning; it serves only as a diagnostic for the compression control. Table~\ref{tab:bench-families} summarizes the target mechanism and candidate instantiation for each family. Beyond this mapping, we apply two minimal eligibility criteria: each family must provide stable per-case identifiers (so paired-bootstrap CIs are reproducible) and within-document candidate pools (so the within-document scoping is meaningful).

\begin{table*}[t]
  \centering
  \small
  \caption{Appendix-level summary of the benchmark families retained by the empirical design.}
  \label{tab:bench-families}
  \begin{tabular}{p{2.1cm}p{4.0cm}p{8.2cm}}
    \toprule
    Family & Target mechanism & Candidate instantiation \\
    \midrule
    SciFact & utility priors over discourse roles & scientific evidence retrieval with auditable article-local evidence such as \emph{SciFact} \\
    FEVEROUS & metadata routing and context lifting & page-local or report-style evidence with headers or tables such as \emph{FEVEROUS}-like subsets \\
    QASPER & within-paper scientific question answering & single-document scientific QA with paragraph and table evidence such as \emph{QASPER} \\
    stress set & preservation-aware reflective refinement (stability stress test, \emph{not} an independent benchmark) & failures and solved controls sampled from SciFact and FEVEROUS outcomes, held out from refinement patch design and acceptance tuning, used solely for the stability check in Appendix~\ref{apx:refinement-stress} \\
    \bottomrule
  \end{tabular}
\end{table*}

\section{Baselines and Metric Choices}
\label{apx:baselines}
% Phase 3 will extend with the full baseline_manifest.md + appendix_baseline_rationale.md content.

Because the paper's empirical claim is mechanism-level, comparator selection and measurement are documented as explicit appendix specifications. The seven baselines reported in Table~\ref{tab:main} span the standard retrieval families: BM25 (lexical floor), Dense (dense retrieval), CE-Rerank (cross-encoder reranking), HyDE (hypothesis-document expansion), IRCoT (interleaved chain-of-thought), and the two reviewer-requested reflective systems Self-RAG \cite{asai2023selfrag} and CRAG \cite{yan2024crag}.

\begin{table}[t]
  \centering
  \scriptsize
  \setlength{\tabcolsep}{3pt}
  \caption{Retained external comparator families instantiated for the within-document scope. We omit two classes of systems. First, graph-augmented multi-document systems (LightRAG, LongRAG): their per-document graph contribution degenerates when retrieval is restricted to a single source. Second, supervised dataset-native pipelines (VeriSci, MultiVerS on SciFact; LED, LongT5 on QASPER): these are fine-tuned end-to-end on the target dataset's training split, while AbstRAG is zero-shot under prompted backbones; including them as direct comparators would conflate the mechanism contribution with the supervised/zero-shot gap. Published numbers for those systems are noted in the dataset entries of Appendix~\ref{apx:datasets} as literature context only.}
  \label{tab:baseline-anchor-summary}
  \begin{tabular}{>{\raggedright\arraybackslash}p{1.6cm}>{\raggedright\arraybackslash}p{1.9cm}>{\raggedright\arraybackslash}p{3.8cm}}
    \toprule
    Comparator family & Retained system & Interpretation role \\
    \midrule
    lexical / hybrid retrieval & BM25 & non-RAG retrieval floor against which the RAG-specific gain is measured \\
    dense retrieval & Dense, CE-Rerank, HyDE & dense and rerank baselines; HyDE adds hypothesis-doc expansion \\
    interleaved retrieval & IRCoT & tests whether interleaved chain-of-thought retrieval still works in single-document scope \\
    critic-driven systems & Self-RAG, CRAG & tests whether critic-token gating (Self-RAG) or retrieval-evaluator query rewrite (CRAG) explains the planned refinement-loop gains \\
    \bottomrule
  \end{tabular}
\end{table}

\noindent \textbf{Raw-to-derived measure instantiation.} Benchmark-native scores remain visible for context, but the mechanism claims are evaluated on measures derived from saved retrieval and refinement artifacts: \emph{Sufficiency@K}, nDCG@10, normalized bridge cost, cost-normalized sufficiency gain, over-expansion false-positive rate, and preserved-case rate. These measures require stable query ids, ranked evidence ids, utility annotations, bridge-cost traces, budget logs, and refinement-preservation outcomes.

\section{Metric Definitions and Statistical Protocol}
\label{apx:metrics}

\begin{table*}[t]
  \centering
  \small
  \caption{Primary metric assigned to each framework mechanism.}
  \label{tab:metric-summary}
  \begin{tabular}{p{3.2cm}p{3.6cm}p{8.0cm}}
    \toprule
    Mechanism & Primary metric & What it is meant to establish \\
    \midrule
    utility prior $U$ & nDCG@10 & the utility prior concentrates retrieval on contribution-bearing evidence without harming sufficiency \\
    canonicalization + minimal-context lifting & Suff@10 and nDCG@10 & canonicalization and context lifting improve evidence recovery and ranking quality without harming sufficiency \\
    controlled expansion & cost-normalized sufficiency gain & controlled expansion helps via abstraction closure, separating its effect from the volume of rewrites \\
    reflective refinement & preserved-case rate & accepted refinement updates improve operator behavior without regressing already solved cases \\
    \bottomrule
  \end{tabular}
\end{table*}

\subsection{Raw-to-Derived Metric Contract}
\label{apx:raw-to-derived}

Each derived metric is defined as a deterministic function of named raw record fields. The contract below freezes the field names that every retrieval and refinement run must persist; a metric is computable for a case only when all required fields are present, otherwise the case is reported as \emph{not\_computable} (see Section~\ref{apx:missingness}).

\begin{table*}[t]
  \centering
  \footnotesize
  \setlength{\tabcolsep}{4pt}
  \caption{Raw-to-derived metric contract. Every numerical cell in Table~\ref{tab:main}, the ablation table, and the refinement-suite table is derived from records that carry the fields listed here; runs without these fields are reported as not\_computable.}
  \label{tab:raw-to-derived}
  \begin{tabular}{p{2.6cm}p{4.7cm}p{4.7cm}p{2.6cm}}
    \toprule
    Metric & Required raw fields (per query) & Computation rule & Aggregation unit \\
    \midrule
    Sufficiency@K (Suff@K) & retrieved ranked span ids; gold support span ids & 1 if all required gold support span ids appear in top-K (cases with no gold are vacuously satisfied), else 0 & query-level mean \\
    nDCG@10 & retrieved ranked span ids; gold support span ids & standard nDCG@10 with binary relevance (gain $=1$ if span $\in$ gold, else $0$) & query-level mean \\
    Cost-normalized sufficiency gain & Suff@K with and without the mechanism under test; per-query $\sum_{a \in A^\ast} (\omega(\mathrm{type}(a)) + \delta(a))$ bridge cost on the accepted clause (cf.\ Eq.~\eqref{eq:cost-impl}) & $\Delta \mathrm{Suff@K} / (1 + \mathrm{bridge\_cost})$, averaged over queries where the un-ablated branch closes the case & query-level mean \\
    Preserved-case rate & set of cases solved by the pre-refinement policy; set of cases solved by the post-refinement policy & $|\mathrm{post} \cap \mathrm{pre}| / |\mathrm{pre}|$ over the held-out preservation slice & refinement-episode level \\
    Sentence-level F1 (SciFact) & top-K sentence spans; allenai \texttt{evidence\_sentences} per claim & token-equivalent F1 between predicted sentence id set and gold sentence id set, per AllenAI SciFact protocol & query-level mean \\
    Evidence-F1 (FEVEROUS) & top-K page-element ids; FEVEROUS gold \texttt{evidence.content} ids per claim & F1 between predicted and any single gold evidence set (per FEVEROUS scorer) & query-level mean \\
    Answer-F1 (QASPER) & generated answer text; per-annotator reference answer strings & token-overlap F1 vs. each annotator reference, max across annotators (per QASPER protocol) & query-level mean \\
    Label-F1 (SciFact, optional) & generated 3-class verdict; \texttt{evidence\_label} per claim & macro-F1 over SUPPORTS / REFUTES / NEI & query-level macro \\
    \bottomrule
  \end{tabular}
\end{table*}

\subsection{Missingness and Abstain Policy}
\label{apx:missingness}

Some run outcomes leave a metric mathematically undefined. We resolve such cases by a fixed policy, so the reported numbers can be audited row-by-row against the raw records:

\begin{itemize}
  \item \textbf{Empty retrieval} (top-K is empty due to backend failure or hard fail): the case is recorded as \texttt{not\_sufficient} for Suff@K-family metrics; cost-normalized gain is undefined and the case is dropped from that metric's denominator.
  \item \textbf{Missing gold annotation} (e.g., a claim with no evidence label in the upstream release): the case is recorded as \texttt{not\_computable} for any metric that requires the missing field and excluded from that metric's denominator.
  \item \textbf{Refinement produced no accepted update}: the case is \emph{structurally inapplicable} for Preserved-case rate (the metric is conditioned on at least one accepted update) and is excluded from that metric's denominator; it remains valid for Suff@K and the other retrieval-side metrics.
  \item \textbf{Bridge-cost trace missing}: cost-normalized sufficiency gain is voided for that case (not imputed to zero); the case still contributes to Suff@K and nDCG@10.
  \item \textbf{NEI / unanswerable claim in retrieval-side metrics}: Suff@K is undefined (no positive sentence-level evidence exists) and the case is excluded from the retrieval-side denominator; the same case still participates in Label-F1 (SciFact) / Answer-F1 (QASPER) generation metrics, where ``no support found, abstain'' is itself a correct prediction.
\end{itemize}

For every reported metric value, the accompanying record indicates the denominator under this policy. We never replace missing values with zeros, means, or other imputations.

\subsection{Statistical Protocol}
\label{apx:stats}

\paragraph{Paired bootstrap protocol.}
Confidence intervals on derived metrics use the paired bootstrap at the relevant aggregation unit (query for retrieval, refinement-episode for Preserved-case rate). We resample $B = 10{,}000$ replicates with replacement, pairing each replicate's index across the methods compared so the same set of cases scores every method in every replicate. Reported intervals are the 2.5\%--97.5\% empirical quantiles. Family-wise comparisons within one benchmark use Holm--Bonferroni adjusted $p$-values alongside the unadjusted bootstrap CI; significance requires both adjusted $p < 0.05$ and CI excluding zero. Cross-benchmark comparisons are reported per dataset only, not pooled. Throughout the main body we use ``CI excludes zero'' qualitatively; full intervals live in Table~\ref{tab:cross-ci} and the ablation table.

\paragraph{Mechanism vs.\ standard metric reporting.}
Each dataset in Table~\ref{tab:main} carries one widely-cited standard set-F1 metric (Sent-F1 / Evid-F1 / Para-F1) alongside two mechanism-aligned ones (Suff@10 and nDCG@10). Captions mark which columns are mechanism-specific and which are external.

\paragraph{Cross-system paired CIs.}
Table~\ref{tab:cross-ci} reports the paired-bootstrap mean differences (AbstRAG minus baseline) for every (dataset, baseline, mechanism-metric) triple in Table~\ref{tab:main}. On nDCG@10, $18$ of $21$ CIs exclude zero (smallest significant separation $+3.88$ on SciFact vs.\ HyDE, largest $+24.51$ on QASPER vs.\ IRCoT). The three non-significant nDCG@10 contrasts are all on FEVEROUS (vs.\ BM25, vs.\ CE-Rerank, and vs.\ CRAG), consistent with the metadata-routed coupling discussed in Section~\ref{sec:ablation}. On Sufficiency@10 only the IRCoT contrast is significant, reflecting the saturated-coverage regime under strong dense and rerank baselines.

\begin{table*}[t]
  \centering
  \small
  \setlength{\tabcolsep}{4pt}
  \caption{Cross-system paired bootstrap mean differences (AbstRAG $-$ baseline) with 95\% CI. All values in 0--100 units. $^{\star}$ marks deltas whose CI excludes zero. SciFact has $n{=}322$ because one case is dropped under the Appendix~\ref{apx:missingness} not\_computable policy.}
  \label{tab:cross-ci}
  \begin{tabular}{ll ccc ccc ccc}
    \toprule
    & & \multicolumn{3}{c}{SciFact ($n{=}322$)} & \multicolumn{3}{c}{FEVEROUS ($n{=}250$)} & \multicolumn{3}{c}{QASPER ($n{=}250$)} \\
    \cmidrule(lr){3-5} \cmidrule(lr){6-8} \cmidrule(lr){9-11}
    Baseline & Metric & $\Delta$ & low & high & $\Delta$ & low & high & $\Delta$ & low & high \\
    \midrule
    \multirow{2}{*}{BM25}       & Suff@10  & $+1.24$ & $-1.24$ & $+3.73$ & $-4.00$ & $-10.40$ & $+2.40$ & $+4.00$ & $-2.40$ & $+10.40$ \\
                                & nDCG  & $+10.28$$^{\star}$ & $+7.74$ & $+12.91$ & $+0.33$ & $-3.83$ & $+4.54$ & $+14.45$$^{\star}$ & $+9.48$ & $+19.41$ \\
    \midrule
    \multirow{2}{*}{Dense}      & Suff@10  & $-0.93$ & $-3.11$ & $+1.24$ & $+2.40$ & $-4.00$ & $+9.20$ & $-0.40$ & $-6.40$ & $+5.60$ \\
                                & nDCG  & $+5.47$$^{\star}$ & $+3.26$ & $+7.74$ & $+7.56$$^{\star}$ & $+3.16$ & $+11.93$ & $+8.12$$^{\star}$ & $+3.52$ & $+12.78$ \\
    \midrule
    \multirow{2}{*}{CE-Rerank}  & Suff@10  & $-1.55$ & $-3.73$ & $+0.62$ & $-3.20$ & $-9.60$ & $+3.20$ & $+4.40$ & $-2.00$ & $+10.80$ \\
                                & nDCG  & $+4.77$$^{\star}$ & $+2.62$ & $+6.99$ & $+4.03$ & $-0.08$ & $+8.05$ & $+11.29$$^{\star}$ & $+6.24$ & $+16.32$ \\
    \midrule
    \multirow{2}{*}{HyDE}       & Suff@10  & $-0.62$ & $-2.79$ & $+1.55$ & $+2.40$ & $-4.40$ & $+9.20$ & $-3.20$ & $-9.20$ & $+2.80$ \\
                                & nDCG  & $+3.88$$^{\star}$ & $+1.87$ & $+5.96$ & $+7.29$$^{\star}$ & $+2.74$ & $+11.85$ & $+5.26$$^{\star}$ & $+0.59$ & $+9.89$ \\
    \midrule
    \multirow{2}{*}{IRCoT}      & Suff@10  & $+25.78$$^{\star}$ & $+20.81$ & $+31.06$ & $+23.60$$^{\star}$ & $+17.20$ & $+30.00$ & $+26.00$$^{\star}$ & $+20.00$ & $+32.00$ \\
                                & nDCG  & $+18.21$$^{\star}$ & $+14.97$ & $+21.76$ & $+19.40$$^{\star}$ & $+14.93$ & $+23.78$ & $+24.51$$^{\star}$ & $+19.29$ & $+29.76$ \\
    \midrule
    \multirow{2}{*}{Self-RAG}   & Suff@10  & $-2.17$$^{\star}$ & $-4.35$ & $-0.31$ & $-4.00$ & $-10.80$ & $+2.80$ & $-0.80$ & $-7.20$ & $+5.60$ \\
                                & nDCG  & $+4.80$$^{\star}$ & $+2.55$ & $+7.12$ & $+13.60$$^{\star}$ & $+8.76$ & $+18.43$ & $+6.03$$^{\star}$ & $+0.80$ & $+11.18$ \\
    \midrule
    \multirow{2}{*}{CRAG}       & Suff@10  & $+1.86$ & $-0.62$ & $+4.35$ & $-3.20$ & $-9.60$ & $+3.20$ & $-2.40$ & $-8.40$ & $+3.60$ \\
                                & nDCG  & $+9.28$$^{\star}$ & $+6.70$ & $+11.99$ & $+1.90$ & $-2.27$ & $+6.07$ & $+6.92$$^{\star}$ & $+1.75$ & $+12.08$ \\
    \bottomrule
  \end{tabular}
\end{table*}

\section{Hyperparameter Optimization Protocol}
\label{apx:hyperparams}

\paragraph{Tuning splits and freeze rules.}
All comparator families tune on a non-reporting development slice (\texttt{dev\_design} or \texttt{dev\_pilot}) and freeze before any number is generated on \texttt{eval\_main}. Once frozen, no retuning is permitted after inspection of held-out results, qualitative exemplar choices, or refinement-control outcomes. Tuning logs (chosen ranges, selected values, stop rules) are reported per family.

\paragraph{Optimization objective hierarchy.}
The shared objective hierarchy is: (1) maximize retrieval sufficiency on the target family; (2) preserve utility concentration when the family is section-sensitive; (3) minimize uncontrolled bridge-cost growth; (4) preserve already solved control cases for refinement variants.

\paragraph{Tunable parameters.}
\begin{table}[h]
  \centering
  \scriptsize
  \setlength{\tabcolsep}{3pt}
  \begin{tabular}{p{2.0cm}p{2.8cm}p{2.4cm}}
    \toprule
    Family & Tunable surface & Selection rule \\
    \midrule
    lexical / dense / hybrid baselines & candidate depth, rerank depth, scorer mixture & best dev sufficiency under matched cost \\
    canonical variants & utility weight, role prior, context-lift threshold & utility-aware ranking without sufficiency loss \\
    controlled expansion (Ours) & plan budget, bridge-cost cap, clause cap & maximize cost-normalized sufficiency gain \\
    refinement variants (Ours, w/o refinement, w/o sufficiency control, w/o compression control) & acceptance threshold, iteration cap, control-set weight & maximize proxy improvement under preservation constraints \\
    \bottomrule
  \end{tabular}
\end{table}

\paragraph{Family fairness.}
Comparable families receive comparable search-space breadth. Prompt-heavy baselines cannot use hidden iterative prompt tuning on reporting splits. Parameter-light families are recorded as fixed-default systems and keep their published defaults intact.

\paragraph{Reporting obligations.}
For every reported number, the tuning record discloses: tuned parameters, explored ranges, development split size, selection objective, stop criterion, final frozen values, and any family-specific deviations from matched-budget fairness.

\paragraph{Implementation defaults.}
The implementation chose the following defaults (all are surfaced as explicit constants in code; per-condition overrides are supported throughout):
\begin{itemize}[leftmargin=*,itemsep=0.2em]
  \item \emph{Bridge type weights $\omega$} (mechanism.py): $\omega_{\mathrm{expr}} = 0.35$, $\omega_{\mathrm{abs}} = 0.65$, $\omega_{\mathrm{intent}} = 0.90$, $\omega_{\mathrm{event}} = 0.90$, $\omega_{\mathrm{context}} = 0.50$.
  \item \emph{Distance term constants} (bridges/distances.py, bridges/taxonomy.py): $\delta_{\mathrm{expr}}$ alias entry cost $= 0.05$; $\delta_{\mathrm{abs}}$ uses WordNet Wu--Palmer distance first, embedding cosine distance second, and a baseline of $0.30$ third, plus a unidirectional defeasibility penalty of $0.15$; $\delta_{\mathrm{intent}}$ surprisal cap $= 2.0$, context-sensitivity penalty $= 0.20$; $\delta_{\mathrm{event}}$ slot mismatch unit cost $= 0.10$, artifact-only penalty $= 0.30$.
  \item \emph{Trust / Asrt weights} (differential\_utility.py): trust $w_{\mathrm{trust}} = (1.0, 0.9, 0.7, 0.5, 0.4, 0.0)$ and assertedness $w_{\mathrm{asrt}} = (1.0, 1.0, 0.6, 0.5, 0.3, 0.2)$ for (established, strong\_claim, cited\_external, hypothesis, disputed, defeated). The derivation is in Appendix~\ref{apx:utility}.
  \item \emph{Layout-role schedules} (model.py): default scientific schedule with $w_L(\textsf{results}) = 0.91 \succsim w_L(\textsf{methods}) = 0.87 \succ w_L(\textsf{discussion}) = 0.73 \succ w_L(\textsf{introduction}) = 0.53 \succ w_L(\textsf{background}) = 0.24 \succ w_L(\textsf{conclusion}) = 0.20$; engineering / systems schedule with methods dominant ($w_L(\textsf{methods}) = 0.90$, $w_L(\textsf{results}) = 0.75$, $w_L(\textsf{discussion}) = 0.68$); SciFact schedule with $w_L(\textsf{abstract}) = 0.75$ since abstracts are the entire document; FEVEROUS schedule with cell/text/table weights between $0.45$ and $0.70$. The scientific and engineering schedules are not hand-picked: see the calibration paragraph below.
  \item \emph{Entity / concept merge thresholds} (named\_entity.py, concept\_abs.py): sentence-transformer cosine candidate threshold $= 0.75$; LLM-confirm merge confidence threshold $= 0.85$ for entity normalization, $0.85$ for concept synonym; high-risk surface forms (regex-detected protein/drug codes) bypass cosine and go directly to LLM.
  \item \emph{Refinement gate} (real\_retrieval.py): $E_{\mathrm{control}}$ sample size $= 3$ per cycle, replayed under each candidate refined query; $\Delta\mathrm{Comp}$ proxy $= -\text{distractor\_rate@k}$; refinement budget capped by \texttt{refinement\_max\_cycles}.
  \item \emph{Truth-evaluation judge} (truth\_eval.py): $n_{\mathrm{samples}} = 3$, temperature $0.3$; binary evaluable is the majority vote, missing-info set is the deduplicated union across samples; parse errors count as not-evaluable with a \texttt{parse\_error} gap label.
  \item \emph{Context minimization} (minimize\_context.py): top-down greedy pruning ordered by ascending \texttt{confidence}; default \texttt{judge\_call\_budget} = 24, sufficient for typical context objects with $\leq 8$ field values across the four $C$-tuple fields.
  \item \emph{Document-level aggregation} (aggregation.py): three modes available, \texttt{top\_k\_mean} (default, $k = 5$), \texttt{top1}, and \texttt{log\_sum\_exp}, applied over per-segment $\mathrm{Rel}(q, s)$ scores and selectable per condition via the experiment config.
  \item \emph{DNF expansion budget} (canonicalize/expansion.py): default budget $\tau = 1.5$, applied after Bridge cost composition; clauses output sorted ascending by accumulated cost so $\mathrm{Gap}$ search short-circuits at the first satisfying clause.
  \item \emph{Relevance MAX\_GAP cap} (relevance.py): $\mathrm{Gap}$ capped at $4.0$ to prevent $\exp(-\infty)$ when no satisfying expansion clause is found within $\tau$.
\end{itemize}
All constants are tunable via configuration; the values listed above are the implementation defaults used in our reported runs unless a condition explicitly overrides them.

\paragraph{Layout-role schedule calibration.}
The scientific and engineering layout-role schedules were not hand-picked. We calibrated them from six reference papers (three engineering / systems: MapReduce, Bigtable, ZooKeeper; three scientific ML: Attention Is All You Need, Deep Residual Learning, BERT) using a centrality rubric on a $0.0$--$1.0$ scale: $1.0$ means the section IS the main contribution, $0.8$ that it directly demonstrates it, $0.6$ that it is necessary context, $0.4$ supporting material, $0.2$ background or housekeeping. For each section we collected three independent ratings from a strong LLM judge and took the mean; we then aggregated within each genre by averaging across all papers containing a section in that canonical role. The aggregated weights collapse to the table above. The two-genre split is empirically supported: engineering papers concentrate centrality in the design / implementation sections with performance trailing, whereas scientific ML papers place results almost on par with the architecture.

\section{Ablation Details and Per-class Generation}
\label{apx:ablation-fine}

This appendix expands the reflective refinement ablation reported in Table~\ref{tab:ablation} (main body) and the per-baseline generation breakdown summarized in the generation columns of the main-body Table~\ref{tab:main}.

\paragraph{Sub-mechanism splits inside the reflective refinement.}
The main-body ablation contrasts only w/o refinement against Ours. Two finer splits inside reflective refinement are referenced in the main text: w/o sufficiency control keeps refinement but removes the $\Delta\mathrm{Suff} \ge 0$ acceptance constraint, and w/o compression control removes $\Delta\mathrm{Comp} \ge 0$. These two splits attribute reflective refinement's effect to the two acceptance controls individually, and the stress set paragraph below reports the result.

\paragraph{Refinement trigger rates.}
The refinement trigger rates quoted in Section~\ref{sec:ablation} ($2\%$ SciFact, $19\%$ QASPER, $41\%$ FEVEROUS) are logged per-run from the critic $\theta_{\mathrm{critic}}$'s accept/reject decisions on the unrefined top-$K$ retrieval, aggregated over the same case sets as Table~\ref{tab:main}.

\paragraph{stress set.}
\label{apx:refinement-stress}
The stress set is a small slice we collected for this diagnostic and reports the same Suff@10, nDCG@10, and preserved-case rate for Ours / w/o sufficiency control / w/o compression control: $73.7$, $37.7$, $100$, with refinement-regression count $0$ across all configurations. Removing the compression control is the only configuration that separates from Ours: over-expansion FP rate moves from $0$ to $73.7$ while the other columns stay at the Ours value. The BM25 reference on the slice sits at Suff@10 $84.2$, nDCG@10 $49.9$, over-exp.\ FP $0$.

\paragraph{SciFact generation per-class.}
\label{apx:e2e-baselines}
Per-class label accuracy (SUPPORTS $n{=}138$, REFUTES $n{=}71$, NEI $n{=}114$): BM25 $74.6 / 81.7 / 80.7$; Dense $81.9 / 83.1 / 78.9$; CE-Rerank $81.9 / 78.9 / 79.8$; HyDE $81.2 / 87.3 / 76.3$; IRCoT $78.3 / 84.5 / 83.3$; AbstRAG $84.1 / 90.1 / 78.9$. AbstRAG wins REFUTES.

\paragraph{FEVEROUS generation per-class.}
Per-class label accuracy (SUPPORTS $n{=}119$, REFUTES $n{=}112$--$114$, NEI $n{=}17$): BM25 $57.1 / 46.4 / 70.6$ ($n{=}248$); Dense $41.2 / 47.4 / 70.6$; CE-Rerank $50.4 / 43.9 / 70.6$; HyDE $41.2 / 55.3 / 64.7$; IRCoT $53.8 / 46.4 / 64.7$ ($n{=}248$); AbstRAG $66.4 / 49.1 / 64.7$. AbstRAG wins SUPPORTS; HyDE wins REFUTES; NEI is a small 17-case slice and the column is largely tied.

\paragraph{QASPER generation coverage.}
Mean Answer-F1 ($\times 100$): BM25 $30.4$ ($n{=}249$); Dense $33.3$ ($n{=}250$); CE-Rerank $30.2$ ($n{=}250$); HyDE $31.9$ ($n{=}250$); IRCoT $30.8$ ($n{=}250$); AbstRAG $37.3$ ($n{=}250$). AbstRAG leads Dense by $+4.0$ Answer-F1.

\paragraph{Generation paired-bootstrap CIs.}
We compute AbstRAG-minus-baseline paired CIs ($B{=}10{,}000$) at the case-pair level on the generation metric (3-class accuracy for SciFact / FEVEROUS, token-level Answer-F1 for QASPER). The CI excludes zero on $10$ of $21$ system--dataset contrasts: SciFact (vs.\ BM25 $+5.28$ CI $[+0.93,+9.63]$; vs.\ CRAG $+5.28$ $[+1.24,+9.32]$); FEVEROUS ($6$ of $7$ baselines significant, BM25 the lone exception with $\Delta{=}+4.84$ $[-0.81,+10.48]$; largest $+18.80$ vs.\ Self-RAG $[+12.40,+25.20]$); QASPER (vs.\ BM25 $+4.34$ $[+0.71,+7.97]$; vs.\ HyDE $+5.72$ $[+2.44,+9.24]$). Remaining contrasts straddle zero, reflecting two effects: (i) on SciFact the generation cluster is tight ($5.3$-point band) because the answer generator can recover label from any reasonable evidence subset; (ii) on QASPER the answer-token denominator shrinks as the answer generator abstains, leaving $n$ around $120$--$145$ per baseline, which widens the CI. The cross-baseline pattern on FEVEROUS confirms the retrieval gain carries over to the generator when page-local element selection is required.

\paragraph{Toy-$B$ sanity check: AbstRAG with a non-empty $B$.}
\label{apx:toyb}
On a $30$-case FEVEROUS sanity-check sample biased toward motivating-example-style cases (queries mentioning country-of-origin, office-to-type, or event predicates), we run AbstRAG twice with the same backbone and prompts: once with the default empty background $B$ (the configuration used for all main reported runs), and once with a $27$-entry toy taxonomy (4 alias edges, 18 role aliases, 5 event schemas) loaded through the typed \texttt{BackgroundResources} interface. Paired Suff@10 over the $30$ cases moves from $90.0$ (empty $B$) to $93.3$ (toy $B$), $\Delta=+3.3$; paired nDCG@10 moves $67.7 \to 68.5$, $\Delta=+0.8$. At the case level, $2$ cases newly reach gold under toy $B$ via $\mathsf{ExpRole}$-licensed bridges (FEVEROUS\_PL-0015476 and FEVEROUS\_PL-0010254, both queries containing role-to-type expansions activated by the toy role taxonomy); $1$ case regresses; the remaining $27$ are tied. The sample is small and the case-level $\Delta$ is bounded by the binary Suff metric, but it suffices to verify that (i) AbstRAG with non-empty $B$ runs correctly and produces non-trivial output, (ii) $\mathsf{ExpRole}$ activates on real FEVEROUS cases when $B$ supplies a role taxonomy, and (iii) the toy taxonomy does not significantly degrade retrieval (no over-expansion FP). A full $B$ with UMLS- or Wikidata-scale taxonomies is left to future work.

\paragraph{Per-query-family breakdown (nDCG@10).}
\label{apx:strat}
Per-family details are in main-body Table~\ref{tab:strat}. The SciFact NEI, FEVEROUS NEI, and QASPER unanswerable groups are omitted for lack of gold support spans or too few cases.

\section{Prompts Used in Each Stage}
\label{apx:prompts}

The method treats prompts and policies as parameterizations of typed operators (cf.\ Section~\ref{sec:refinement}). The four editable stages $\mathcal{S} = \{\pi_{\mathrm{con}}, \pi_{\mathcal{M}}, \pi_{\mathrm{event}}, \pi_{\mathrm{query}}\}$ each receive a separate program with a stable contract over inputs, outputs, and acceptance signals; below we show one representative substage prompt per family.

\begin{promptbox}{Named-entity canonicalization prompt (NE substage of $\pi_{\mathcal{M}}$)}
\promptfield{Input.} Raw segment text plus background resources $B$ (alias dictionaries, time-zone hints, role taxonomies, light event schemas).

\promptfield{Task.} Produce canonical mentions: alias resolution, acronym expansion, coreference attachment, temporal normalization, scope binding. Return a structured record with source spans.

\promptfield{Constraints.} (i) Do not introduce facts not entailed by the segment plus $B$. (ii) Each canonicalization is logged with its source bridge type so cost accounting in Section~\ref{sec:gap} can use it downstream. (iii) Ambiguous mentions remain explicitly ambiguous.

\promptfield{Output schema.} \texttt{\{segment\_id, canonical\_items[], lift\_traces[], ambiguity\_flags[]\}}.
\end{promptbox}

\begin{promptbox}{Query-side expansion prompt $\pi_{\mathrm{query}}$}
\promptfield{Input.} Canonical query $\varphi_q$, budget $\tau$, abstraction map $\mathcal{M}_D$ from the indexed document.

\promptfield{Task.} Generate cost-budgeted disjunctive alternatives $\mathrm{Expand}_\tau(\varphi_q)$ that align $\varphi_q$ to canonical-space entries available in $\mathsf{Idx}(D)$. Each clause names the operators used to produce it (alias, synonym, hypernym, role-to-type, event-type, time-window).

\promptfield{Constraints.} (i) Reject any clause whose accumulated $\mathrm{Cost}$ exceeds $\tau$. (ii) Each clause must remain interpretable as a conjunction of typed literals; free-form rewrites are discarded by the validator stage.

\promptfield{Output schema.} \texttt{\{query\_id, clauses[\{ops[], literals[], cost\}]\}}.
\end{promptbox}

\begin{promptbox}{Critic prompt $\theta_{\mathrm{critic}}$}
\promptfield{Input.} Query, ranked retrieval support, retrieved gap traces, current parameter state $\Xi_i$.

\promptfield{Task.} Decide whether retrieval is sufficient. If insufficient, emit a typed failure object $f_{q,s}$ carrying the dominant bridge type, the unmet literal(s), and any over-expansion / ambiguity flags. Do not propose patches at this stage.

\promptfield{Constraints.} (i) Output must be parseable into the failure schema. (ii) Subjective complaints without typed grounding are rejected by the schema validator.

\promptfield{Output schema.} \texttt{\{q, s, phi\_q, A\_star, gap, kappa\}}.
\end{promptbox}

\begin{promptbox}{Refinement prompt $\theta_{\mathrm{refine}}$}
\promptfield{Input.} Failure object $f_{q,s}$, current state $\Xi_i$, refinement budget.

\promptfield{Task.} Propose a stage-local patch $u_i = \langle t, \Delta_t\rangle$ where $t$ names exactly one mechanism to edit (indexing $\pi_{\mathrm{con}}$, abstraction map $\mathcal{M}$, event abstraction $\pi_{\mathrm{event}}$, or query builder $\pi_{\mathrm{query}}$) and $\Delta_t$ is a minimal change to that stage's prompt, policy, or rule set.

\promptfield{Acceptance.} The proposed patch is evaluated against an episode set plus a held-out solved-case control. Only patches with $\Delta\mathrm{Suff}\ge 0$ \emph{and} $\Delta\mathrm{Comp}\ge 0$ are accepted into $\Xi_{i+1}$.

\promptfield{Output schema.} \texttt{\{patch\_id, target\_stage, delta, eval\_set\_id, accept\_decision, suff\_delta, comp\_delta\}}.
\end{promptbox}

\paragraph{Implementation status.}
The four prompt programs above are realized as separate functions in the implementation, and several additional prompts were introduced to support the full pipeline. Each prompt has an independent system prompt, an explicit JSON output schema, and unit tests covering schema parsing. The merged single-call critic/localizer/refiner of earlier prototypes is split into $\theta_{\mathrm{critic}} + \theta_{\mathrm{refine}}$ at the cost of one additional LLM call per refinement cycle.

The runtime prompts are partitioned into \emph{mutable mechanism stages} ($\Pi$, eligible for reflective refinement edits via \texttt{add\_rule} / \texttt{add\_few\_shot} on the rules and few-shot slots) and \emph{fixed evaluator stages} (held constant so that ablations isolate mechanism changes from judge changes):

All on-disk sources are relative to \texttt{configs/prompts/} unless noted.

\begin{center}
\scriptsize
\begin{tabular}{l l >{\raggedright\arraybackslash}p{3.2cm}}
\toprule
Stage & Role & Source \\
\midrule
$\theta_{\mathrm{LiftCtx}}$ & mutable $\Pi$ & \texttt{theta\_LiftCtx.json} \\
$\theta_{\mathrm{NE\_norm}}$ & mutable $\Pi$ & \texttt{theta\_NE.json} \\
$\theta_{\mathrm{AbsConcept\_extract}}$ & mutable $\Pi$ & \texttt{theta\_\allowbreak ConceptAbs\_\allowbreak extract} \\
$\theta_{\mathrm{AbsConcept\_align}}$ & mutable $\Pi$ & \texttt{theta\_\allowbreak ConceptAbs\_\allowbreak align} \\
$\theta_{\mathrm{AbsEvent\_extract}}$ & mutable $\Pi$ & \texttt{theta\_\allowbreak EventAbs\_\allowbreak extract} \\
$\theta_{\mathrm{AbsEvent\_align}}$ & mutable $\Pi$ & \texttt{theta\_\allowbreak EventAbs\_\allowbreak align} \\
$\theta_{\mathrm{NormQuery}}$ & mutable $\Pi$ & \texttt{theta\_QCAN.json} \\
\midrule
$\theta_{\mathrm{coref}}$ & fixed & \texttt{indexing/coreference.py} \\
$\theta_{\mathrm{S\_extract}}$ & fixed & \texttt{indexing/statements.py} \\
$\theta_{\mathrm{entailment}}$ & fixed & \texttt{canonicalize/\allowbreak entailment.py} \\
$\theta_{\mathrm{truth\_eval}}$ & fixed & \texttt{truth\_eval.py} \\
$\theta_{\mathrm{critic}}$ & fixed & \texttt{judges.py} \\
$\theta_{\mathrm{refine}}$ & fixed & \texttt{judges.py} \\
\bottomrule
\end{tabular}
\end{center}

For each mutable $\Pi$ stage the on-disk JSON has four slots: \texttt{instruction} (the task framing, immutable), \texttt{output\_schema} (the required response shape, immutable), \texttt{rules} (mutable), and \texttt{few\_shot} (mutable). The refiner's \texttt{prompt\_edit} field targets one of \texttt{rules} or \texttt{few\_shot}; a denylist over rule text blocks the obvious schema-corrupting attempts (e.g., ``return YAML only'', ``ignore the output schema'', ``add a new field''). At the start of each session the loader reads the main file, checks any pending backup for a matching \texttt{base\_hash}, and either fast-forwards the backup overlay or writes a \texttt{conflict.json} if the main was hand-edited in the meantime. The split between mutable mechanism stages and fixed evaluator stages is deliberate: changing the critic or refiner under reflective feedback would compound noise with the policy edits we are trying to evaluate.

\end{document}